\documentclass{jssc}

\def\textsubscript#1%
{$_{\text{#1}}$}

\def\cdd{\mbox{\boldmath$\cdot$}~}

\input{amssym.def}
\include{graphix}
\usepackage{multirow}

\makeatletter
\def\@oddfoot{\hfill}
\newcount\shumeicount
\def\setshumei#1#2#3{%
  \shumeicount=\count0
  \def\@oddhead{%
    \raise-5pt\hbox to0pt{\vrule width\hsize height 0pt depth 0.4pt\hss}\relax
    \ifnum \shumeicount=\count0
      \raise-7pt\hbox to0pt{\vrule width\hsize height 0pt depth 0.4pt\hss}\relax
      #1
    \else
      \ifodd\count0
        #2
      \else
        #3
       \fi
     \fi
  }%
}
\makeatother
\makeatletter
\def\@oddfoot{\hfill}
\newcount\shujiaocount
\def\setshujiao{%
  \shujiaocount=\count0
  \def\@oddfoot{%
      \ifodd\count0
      \else
      \fi
  }%
}
\makeatother
\def\title#1#2#3#4{{
  \vspace*{0.3cm}
  \begin{flushleft} \Large\bf #1\end{flushleft}
  \vspace*{-0.2cm}
      \begin{flushleft}
      \bf #2
      \end{flushleft}
      \footnotetext{\hspace{-6mm} #3\\ #4}}}

\def\dshm#1#2#3#4
{\setshumei{J Syst Sci Complex (#1) #2:
{\thepage--\pageref{LastPage}}\hfill}
            {\hfill {\small #3}\hfill\hbox to0pt{\hss\thepage}}
            {\hbox to0pt{\thepage\hss}\hfill {\small #4}\hfill
            }
            \setshujiao}
\def\drd#1#2
{{\vskip 1cm\small \begin{flushleft}
 #1 \\
 #2\\
\copyright The Editorial Office of JSSC \&  Springer-Verlag GmbH
Germany 2018
\end{flushleft}}}

\def\hat{\widehat}
\def\tilde{\widetilde}

\def\epsilon{\varepsilon}

\begin{document}

\title{Monocular Depth Estimation via Neural Network with Learnable Algebraic Group and Ring Structures}
{\uppercase{Wang} Qianlei  \cdd \uppercase{Chen} Kexun  \cdd \uppercase{ZHANG} Shaolin \cdd \uppercase{GAO}  Hongli \\ \cdd \uppercase{zhang} Chaoning \cdd \uppercase{Li} Tianrui \cdd \uppercase{Qin} Xiaolin  }
{\uppercase{Wang} Qianlei \cdd \uppercase{ZHANG} Shaolin  \cdd \uppercase{Qin} Xiaolin \textbf{(Corresponding author)} \\
Chengdu Institute of Computer Applications, Chinese Academy of Sciences, Chengdu 610213 China. \\   
School of Computer Science and Technology, University of Chinese Academy of Sciences, Beijing 101408 China. 
Email: wangqianlei36@gmail.com, zhangshaolin23@mails.ucas.ac.cn, qinxl2001@126.com. \\
\uppercase{Chen} Kexun \cdd \uppercase{Li} Tianrui  \\
School of Computing and Artificial Intelligence, Southwest Jiaotong University, Chengdu 611756 China. \\
Email: chenkexun@my.swjtu.edu.cn, trli@swjtu.edu.cn \\
\uppercase{GAO}  Hongli \\
School of Mechanical Engineering, Southwest Jiaotong University, Chengdu 611756 China. \\
Email: hongli\_gao@swjtu.cn \\
\uppercase{zhang} Chaoning \\
School of Computer Science and Engineering, University of Electronic Science and Technology of China, Chengdu 610054 China. Email: chaoningzhang1990@gmail.com 
   } 
{$^*$This work was supported by Sichuan Science and Technology Program (2025JDDQ0008, 2024NSFJQ0035), 2025 Sichuan Major Science and Technology Special Project ``Unveiling the List and Commanding the Lead" (Grant No. 2025ZDZX0014), and the Talents by Sichuan provincial Party Committee Organization Department.}

\drd{DOI: }{Received: x x 20xx}{ / Revised: x x 20xx}


\dshm{20XX}{XX}{A TEMPLATE FOR JOURNAL}{\uppercase{Surname1
Firstname1} $\cdd$ \uppercase{Surname2 Firstname2} $\cdd$
\uppercase{Surname3 Firstname3}}

\Abstract{Monocular depth estimation (MDE) has witnessed remarkable progress driven by Convolutional Neural Networks and transformer-based architectures. However, these approaches typically treat the problem as a generic image-to-image regression on Euclidean grids, thereby overlooking the intrinsic algebraic and geometric structures induced by perspective projection. To address this limitation, we propose LAGRNet, a novel framework that fundamentally grounds MDE in algebraic geometry by explicitly embedding learnable group, ring, and sheaf structures into the deep learning pipeline. Modeling feature maps as sections of a sheaf over an approximated image manifold, our method first establishes a Group-defined Feature Manifold (GFM) parameterized by a learned algebraic group action to enforce projective equivariance and robustness against view changes. To facilitate algebraically consistent cross-scale interactions, we subsequently introduce a Ring Convolution Layer (RCL) that formulates feature fusion as a graded ring homomorphism. Furthermore, to ensure global topological consistency, a Sheaf-based Module (SM) aggregates local depth cues via Čech nerve on the image topology. Extensive zero-shot evaluations across the KITTI, NYU-Depth V2, and ETH3D benchmarks demonstrate that LAGRNet significantly outperforms state-of-the-art methods in both accuracy and generalization capabilities. Code is available at \textit{https://github.com/Casit-ARIS-WQL/LAGRNet}.}      

\Keywords{Monocular depth estimation, algebraic group, algebraic ring, deep learning.}        



\section{Introduction}
Estimating scene depth from a single RGB image is a fundamental yet ill-posed problem in computer vision, serving as a cornerstone for applications ranging from autonomous driving and robotics to augmented reality. Over the past decade, Convolutional Neural Networks (CNNs) have advanced the field by leveraging translation equivariance, while recent Transformer-based architectures have further pushed performance boundaries by capturing long-range dependencies and global context. Despite these successes, most existing approaches model monocular depth estimation primarily as a statistical learning problem on Euclidean grids. This perspective often overlooks the profound algebraic and geometric constraints imposed by the physical process of image formation, particularly the structures induced by perspective projection.

In classical projective geometry, the relationship between a 2D image and the 3D world is governed by algebraic laws. Every pixel corresponds to a ray intersecting the world along an algebraic variety, and the transformation between views adheres to the action of the projective linear group, PGL(3). Furthermore, the interaction of features across different scales can be formalized through graded rings, and the consistency of local geometric information across an image domain is naturally described by sheaf theory and Čech nerve. Existing deep learning methods typically approximate these geometric properties implicitly through data augmentation or learn them as statistical correlations, lacking a principled mechanism to enforce such algebraic constraints explicitly.

We posit that incorporating learnable algebraic structures directly into the network architecture can provide the strong inductive biases necessary for robust and generalizable monocular depth estimation. To this end, we introduce LAGRNet, the first neural network framework to explicitly embed algebraic group, ring, and sheaf structures into the depth estimation pipeline. Our approach moves beyond standard Euclidean regression by casting feature extraction and fusion as operations on algebraically defined manifolds and modules.

Our contributions are threefold: 
\begin{itemize} 
	\item We propose a Group-defined Feature Manifold (GFM), which models latent features as sections on a manifold acted upon by a learned algebraic group approximating PGL(3). By aggregating features over the group orbit, the GFM explicitly enforces projective equivariance, enabling the network to generalize robustly across diverse viewpoints and perspective distortions. \item We introduce a Ring Convolution Layer (RCL) to fundamentally reframe multi-scale feature fusion. By treating the feature space as a graded module over a graded ring, RCL formulates convolution as a ring homomorphism. This ensures that cross-scale interactions obey strict algebraic alignment via a discrete Cauchy product, preserving geometric consistency during upsampling. 
	\item We design a Sheaf-based Module (SM) to enforce global topological coherence. Interpreting the feature map as a sheaf over the image topology, the SM utilizes Graph Convolutional Networks on the Čech nerve to propagate context and minimize cohomological obstructions, thereby correcting local inconsistencies and ensuring a topologically valid depth map. \end{itemize}

Extensive zero-shot evaluations on the KITTI, NYU-Depth V2, and ETH3D datasets demonstrate the efficacy of our approach. LAGRNet achieves state-of-the-art performance, significantly improving upon recent methods in both absolute relative error and accuracy metrics, validation that embedding algebraic geometry principles yields a more robust and geometrically consistent solution for monocular depth estimation.

\begin{figure}
	\centering
	\includegraphics[width=1\columnwidth]{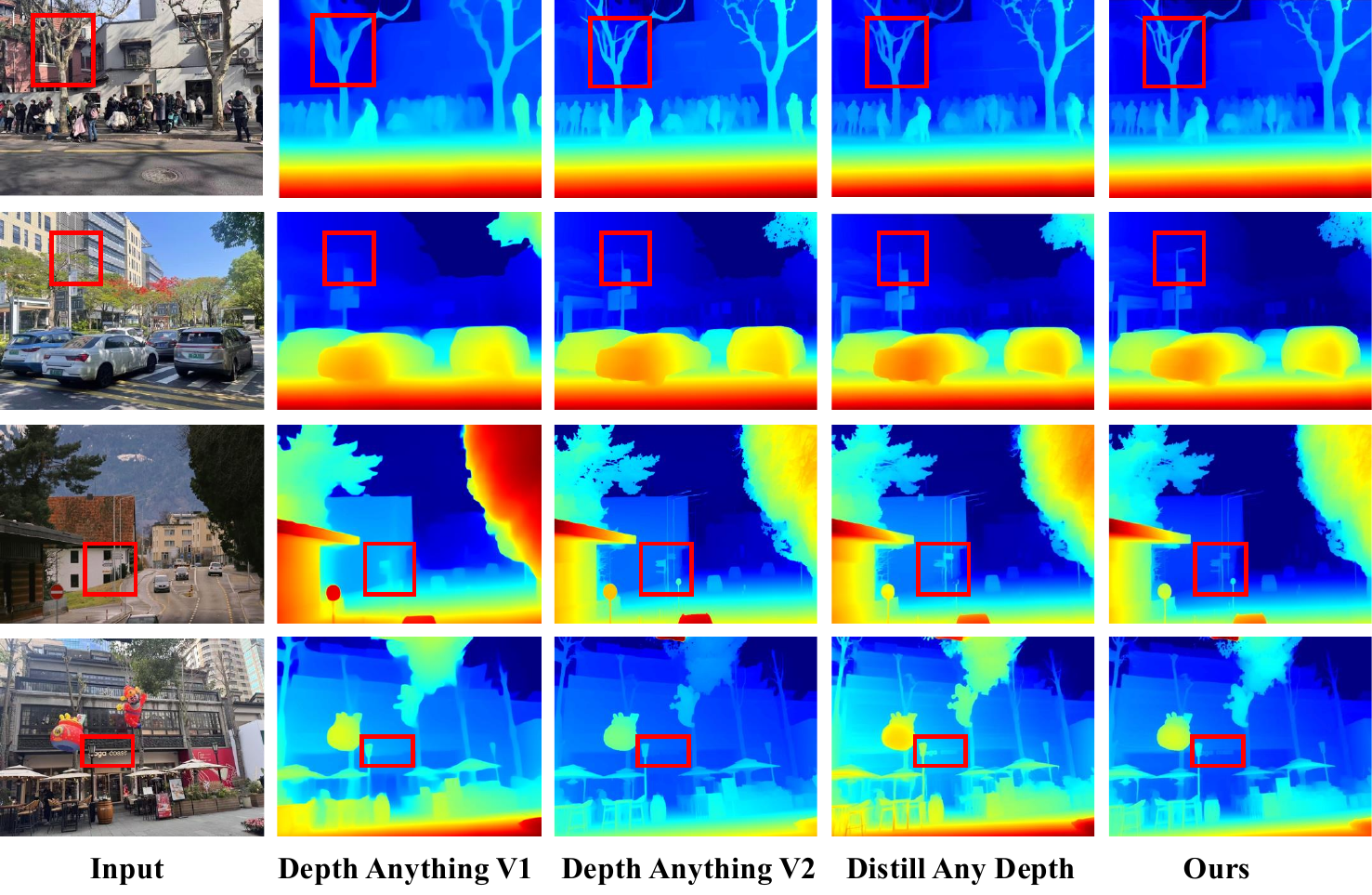 } 
	\caption{Visual results of monocular depth estimation on real-world scenes. In comparison with SOTA methods, the proposed method exhibits more accurate estimation of local details, as indicated by the red rectangles, and delivers improved overall coherence in depth prediction. }
	\label{fig1}
\end{figure}

\section{Related Work}
\subsection{CNN-based Methods}
CNN-based methods advanced MDE advanced through three interconnected directions: enhancing contextual integration, reducing reliance on labeled data, and optimizing data strategies. 
Early approaches \cite{li2015depth, liu2015learning, wang2015towards} focus on contextual refinement, using segmentation to link local details and global context.
Self-supervised methods avoid labeled data, using unlabeled video sequences \cite{zhou2017unsupervised, almalioglu2019ganvo} or stereo pairs \cite{garg2016unsupervised, wang2019unos}.

Key data strategy advancements MDE focus on boosting generalization: DiverseDepth \cite{yin2002diversedepth} enhances affine invariance in depth prediction by leveraging diverse datasets, while Omnidata \cite{eftekhar2021omnidata} introduces a scalable pipeline for generating multi-task vision datasets from 3D scans, providing high-quality depth-relevant training resources to strengthen CNN-based MDE performance.

\subsection{Transformer-based Methods}
Transformers surpass CNNs in modeling local and global depth cues through self-attention mechanisms, with hybrid and pure Transformer designs driving progress.  
Hybrid designs \cite{bhat2021adabins, zhao2022monovit, bae2023deep} integrate CNNs’ strengths in spatial locality with transformers’ prowess in global context modeling, enhancing depth estimation by bridging fine-grained details and high-level scene understanding.

Pure transformer frameworks enhance scalability and generalization. MiDaS \cite{birkl2023midas} pioneered zero-shot cross-dataset transfer via multi-dataset training, while DPT \cite{ranftl2021vision} replaced CNN backbones with ViT for breakthrough dense prediction. Furthermore, DepthAnything v1 and v2 \cite{yang2024depth, yang2024depthv2} improved fine-grained estimation via large-scale labeled and unlabeled training. To refine context and efficiency, TC-Depth \cite{ruhkamp2021attention} uses spatial-temporal attention for multi-frame geometry, and BANet \cite{aich2021bidirectional} reduces ambiguity via bidirectional attention. In parallel, BinsFormer \cite{li2024binsformer} optimizes depth bins through global-local integration, whereas DepthPro \cite{bochkovskii2024depth} employs multi-scale ViT for fast and detail-preserving inference. 

Recent work bridges transformers with generative models and distillation to advance depth estimation, unlocking new capabilities in generalization and knowledge transfer. GenPercept \cite{xu2024matters} repurposes diffusion models for dense perception through one-step finetuning, while Marigold \cite{ke2025marigold} adapts latent diffusion models such as Stable Diffusion to depth and surface normal tasks with minimal modifications, yielding strong zero-shot generalization. Finally, DistillAnyDepth \cite{he2025distill} further enhances transformer-based estimators via cross-context distillation by integrating global and local cues to refine pseudo-label quality.

In contrast, our LAGRNet stands out by explicitly embedding algebraic geometry structures, unlike CNNs and transformers that only implicitly encode geometric geometric cues through heuristics. 
We introduce a Group-defined Feature Manifold with algebraic group actions, enabling view-equivariant features for robust cross-perspective generalization, and a Ring Convolution Layer that models convolutions as graded ring morphisms to capture multi-scale interactions algebraically. By leveraging projective geometry's inherent algebraic structures, LAGRNet moves beyond Euclidean grid-based regression, offering a more principled MDE solution.

\section{Methodology}

\subsection{Overview}
The overall architecture of the proposed LAGRNet, illustrated in Figure 2, fundamentally re-envisions the monocular depth estimation pipeline by embedding algebraic structures into a deep learning framework. Unlike standard approaches that treat feature extraction and fusion as purely statistical operations on Euclidean grids, our method enforces strict geometric equivariance, multi-scale algebraic alignment, and global topological consistency throughout the inference process.

The pipeline begins with a hierarchical encoder based on the Swin Transformer \cite{liu2021swin}, which extracts a pyramid of feature maps from the input image to capture semantic and spatial information across three distinct scales. The highest-resolution level of this hierarchy, the feature stream enters the Group-defined Feature Manifold (GFM). This module models the latent features as sections on a manifold acted upon by a learned projective group. By transforming and aggregating features over the group orbit, GFM explicitly injects rotation and scale equivariance into the semantic bottleneck, rendering the representation robust to perspective distortions inherent in monocular views.

\begin{figure}
	\centering
	\includegraphics[width=1\textwidth]{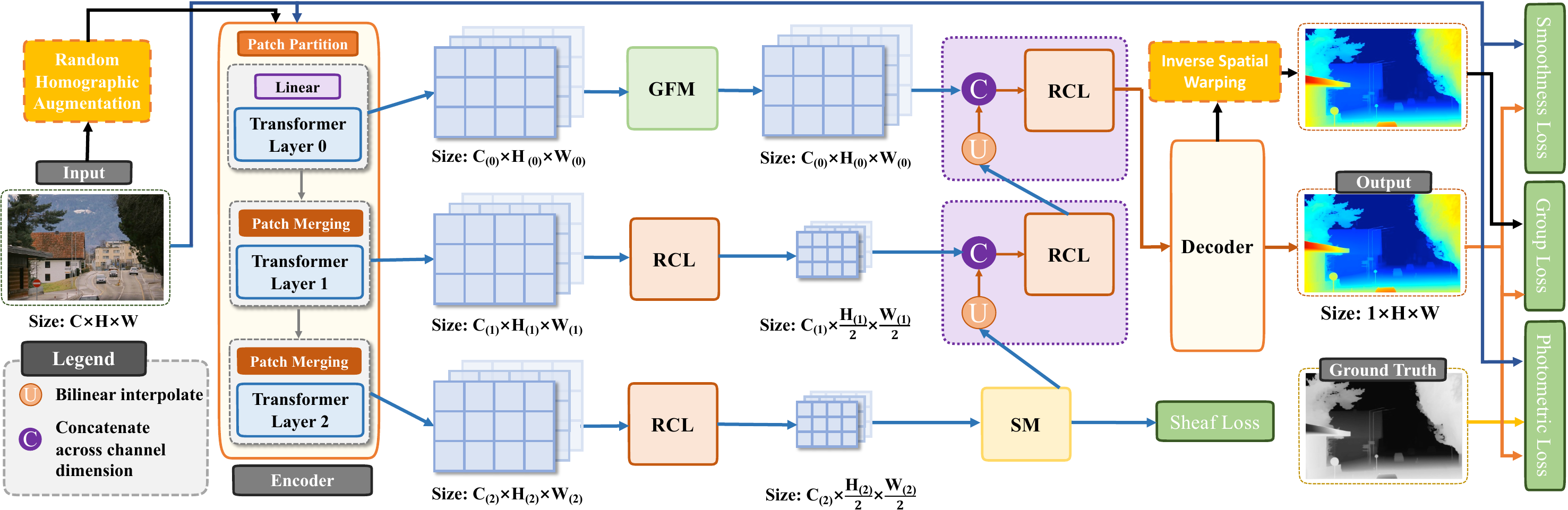} 
	\caption{The overall structure of LAGRNet. We use Swin-T-Base as the backbone, and a vanilla up-sampling network as decoder.}
	\label{fig2}
\end{figure}

To progressively recover spatial resolution while maintaining algebraic consistency, the features are subsequently decoded through a U-shaped structure governed by Ring Convolution Layers (RCL). Rather than relying on independent convolutional operations, RCL treats the multi-scale feature space as a graded module over a graded ring. This design enforces a ring homomorphism constraint via the discrete Cauchy product fusion, ensuring that features from heterogeneous scales are aligned mathematically during the upsampling and concatenation process.

Following the features aligned, the deep features are immediately refined by the Sheaf-based Module (SM) to ensure global topological consistency at the semantic bottleneck. By interpreting the low-resolution feature map as a sheaf over the image topology, the SM utilizes Graph Convolutional Networks on the Čech nerve to propagate global context. This step explicitly minimizes the cohomological obstruction to forming a global section, thereby correcting local inconsistencies before the resolution is restored.

The entire network is optimized end-to-end using a comprehensive objective function that harmonizes visual fidelity with geometric constraints. This includes a photometric loss for synthesis quality, a group consistency loss for geometric invariance, a sheaf energy loss for topological continuity, and an edge-aware smoothness term. Through this algebraically structured design, LAGRNet effectively transforms local visual cues into a globally consistent and geometrically robust depth estimation.

\subsection{Backbone}
To extract robust multi-scale representations, we utilize the Swin Transformer as our feature extractor $\mathcal{B}(\cdot)$. Unlike standard CNNs, the Swin Transformer leverages hierarchical vision transformers with shifted windows. This architecture allows the network to model long-range dependencies through self-attention mechanisms while maintaining computational efficiency and preserving local geometric details, which are critical for dense depth prediction.

Specifically, given an input image $I \in \mathbb{R}^{B \times 3 \times H \times W}$, the backbone first splits the image into non-overlapping patches. These patches are then processed through a series of Swin Transformer blocks containing Window-based Multi-head Self-Attention (W-MSA) and Shifted Window-based Multi-head Self-Attention (SW-MSA). To balance the trade-off between high-level semantic abstraction and low-level spatial resolution, we discard the final stage of the standard Swin backbone and explicitly extract features from the first three hierarchical stages:
\begin{equation}
\left\{\mathcal{F}_{(i)}\right\}_{i=0}^2=\mathcal{B}(I), \quad \text{s.t.} \quad \mathcal{F}_{(i)} \in \mathbb{R}^{B \times C_{(i)} \times \frac{H}{2^{i+2}} \times \frac{W}{2^{i+2}}},
\end{equation}
where $i \in \{0, 1, 2\}$ denotes the scale index. Consequently, we obtain a pyramid of feature maps with strides of $\{4, 8, 16\}$ relative to the input resolution. Among these, the high-resolution feature map $\mathcal{F}_{(0)}$ retains fine-grained textures and object boundaries. To capture intrinsic geometric structures at the finest granularity, $\mathcal{F}_{(0)}$ is directly fed into the GFM to learn geometrically equivariant representations. As the hierarchy deepens, $\mathcal{F}_{(1)}$ captures intermediate structural information, while the deepest map $\mathcal{F}_{(2)}$ encodes rich semantic context and global scene layout. These two coarser scales are processed by the RCL for spectrally aligned multi-scale fusion. The RCL-refined semantic feature $\mathcal{F}_{(2)}$ is forwarded to the SM, where global consistency is enforced through topological diffusion on the Čech nerve. By capping the maximum downsampling factor at 16 rather than the standard 32, we effectively mitigate the excessive loss of spatial fidelity often encountered in deeper network stages, thereby ensuring precise depth recovery even in complex scenes.

\subsection{ Group-defined Feature Manifold}

Monocular depth estimation fundamentally requires geometric equivariance: as the scene undergoes a projective transformation due to viewpoint changes, the estimated depth map must transform commensurately. Standard CNNs, restricted to fixed Euclidean grids, lack the inherent capacity to model such continuous projective distortions. To address this, we introduce the GFM. This module reformulates feature extraction by lifting the representation from the 2D plane to the PGL(3) to explicitly construct the feature orbit.Rather than assuming a static spatial prior, GFM dynamically approximates the continuous group orbit by sampling a discrete set of transformations that span the local geometric variations. We employ a conditional generator to propose candidate elements from the group manifold. 

Let the input global feature be $\mathcal{F}_{(0)} \in \mathbb{R}^{B \times C_0 \times H \times W}$. We generate a set of $K$ matrix representatives $\{\theta_{b,j}\}_{j=1}^K \subset \text{PGL(3)}$ for the $b$-th sample, which serve as the orbit discretization samples. Since $\text{PGL(3)} \cong \text{GL} / \mathbb{R}^*$, any matrix $\mathbf{M}$ is equivalent to $\lambda \mathbf{M}$. To strictly parameterize unique elements and enforce numerical stability, we impose a unit Frobenius norm constraint via a canonical mapping $\Phi$:
\begin{equation}
\theta_{b,j} = \Phi\left( W_2 \operatorname{ReLU} \left(W_1 \mu_b + b_1\right)+b_2 \right), \quad \text{s.t.} \quad \Phi(\mathbf{M}) = \frac{\mathbf{M}}{||\mathbf{M}||_F + \epsilon}
\end{equation}
where $\mu_b = \text{GlobalAvgPool}(\mathcal{F}{(0)})$ denotes the global context vector, $\epsilon$ prevents division singularities, and $W_1, W_2$ are learnable parameters. This parameterization ensures that the generated transformations effectively cover the manifold of planar homographies while maintaining a compact representation.We formalize the group action $g \cdot \mathcal{F}$ as a differentiable coordinate transformation on the projective plane. Let $\Omega = [0, H-1] \times [0, W-1]$ denote the discrete feature grid domain. We bridge Euclidean and projective spaces via two auxiliary mappings defined as the embedding $\iota(u, v) = [u, v, 1]^\top$ and the dehomogenization $\pi(x_1, x_2, x_3) = (x_1/x_3, x_2/x_3)$. The inverse geometric transformation maps the target coordinate $p \in \Omega$ back to the source frame via $\tilde{p}' \sim \theta_{b,j}^{-1} \cdot \iota(p)$. The transformed feature value $\eta_{b,c}^{(j)}(p)$ is computed via differentiable bilinear interpolation:
\begin{equation}
\eta_{b,c}^{(j)}(p) = \sum_{q \in \mathcal{N}(\pi(\tilde{p}'))} \mathcal{F}{(0)b,c}(q) k{\mathrm{bil}}\left(\pi(\tilde{p}') - q\right),
\end{equation}
where $k_{\mathrm{bil}}(\cdot)$ is the bilinear kernel and $\mathcal{N}$ denotes the 4-pixel neighborhood. Finally, we perform an attention-driven integration over the approximated orbit. A view-selection vector $\bm{\alpha}_b \in \Delta^{K-1}$ modulates the transformed features, and the final output is obtained by linear projection:
\begin{equation}
F_{\text{GFM}} = W_{\text{proj}} \left( \sum_{j=1}^{K} \alpha_{b,j} \cdot \eta^{(j)} \right),
\end{equation}
where $W_{\text{proj}}$ is a learnable linear projection. This aggregation effectively marginalizes out the geometric variations along the orbit, yielding a robust equivariant representation.

\begin{figure}
	\centering
	\includegraphics[width=1\columnwidth]{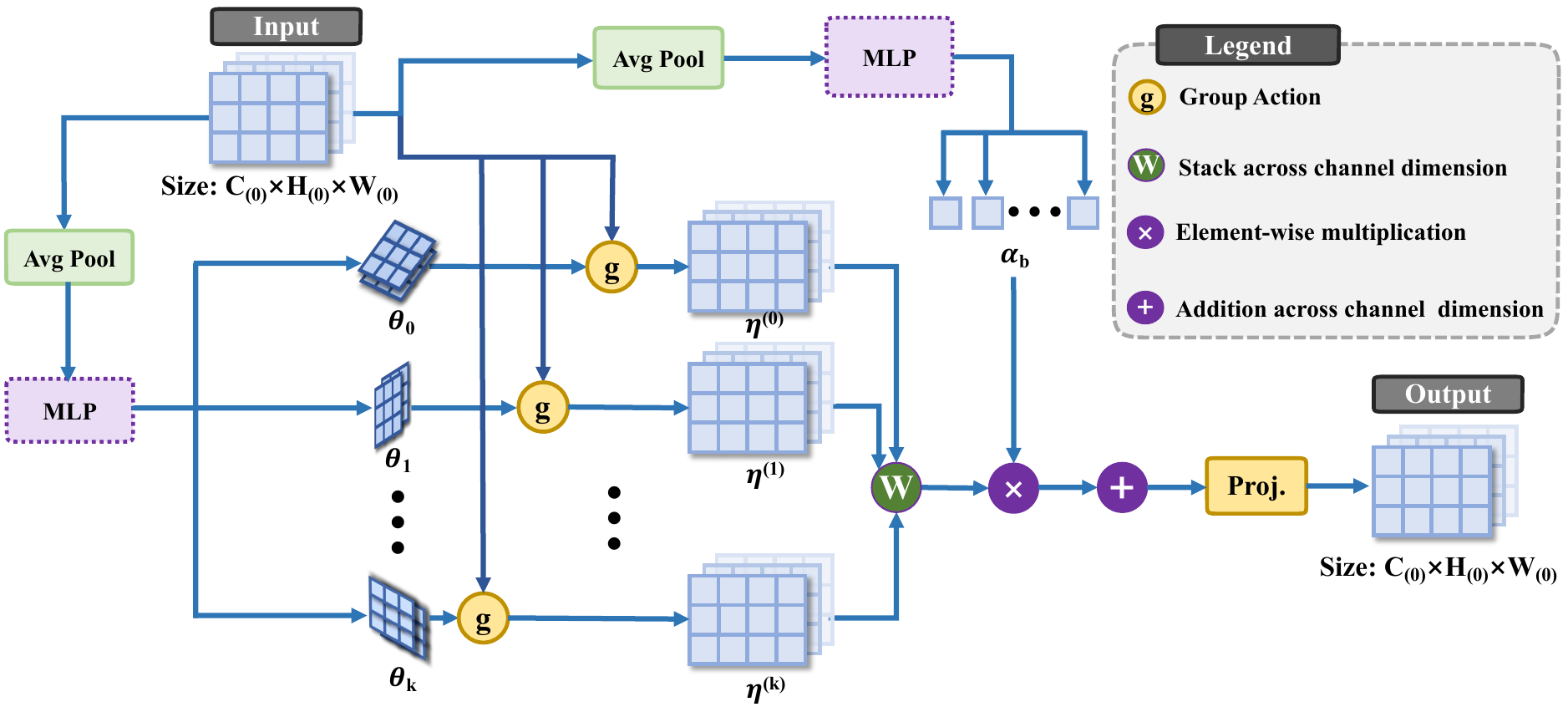 } 
	\caption{Details of Group‑defined Feature Manifold.}
	\label{fig3}
\end{figure}



\begin{definition}
	A feature mapping $\Psi: L^2(X) \to L^2(X)$ is $G$-equivariant if it commutes with the group representation $\rho$:
	\begin{equation}
	\Psi(\rho(g) \mathcal{F}) = \rho(g) \Psi(\mathcal{F}), \quad \forall g \in G, \mathcal{F}\in L^2(X)
	\end{equation}
\end{definition}

\begin{proposition}
	The GFM module is $G$-equivariant, provided that the generator $\mathcal{G}$ satisfies the projective conjugation property where the induced transformations follow $\theta' \sim g \theta g^{-1}$ (up to a scalar), the attention weights $\alpha$ remain geometrically invariant satisfying $\alpha(\rho(g)\mathcal{F}) = \alpha(\mathcal{F})$, and the linear projection $W_{\text{proj}}$ commutes with the spatial warping operator $\rho(g)$.
\end{proposition}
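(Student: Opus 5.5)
The plan is to write the GFM as a composition of three maps and check each one separately: the orbit-warping stage $\mathcal{F}\mapsto(\eta^{(1)},\dots,\eta^{(K)})$, the attention-weighted sum $\sum_j \alpha_j \eta^{(j)}$, and the projection $W_{\text{proj}}$. First I fix the representation. For $g\in G\subseteq \mathrm{PGL}(3)$ I set $(\rho(g)\mathcal{F})(p)=\mathcal{F}(\pi(g^{-1}\iota(p)))$. Because $\pi$ and $\iota$ only see the projective class, this formula does not depend on the scale of the representative of $g$. Writing $T_\theta\mathcal{F}(p)=\mathcal{F}(\pi(\theta^{-1}\iota(p)))$ for the warping used in the definition of $\eta^{(j)}$, we get $\eta^{(j)}=T_{\theta_j(\mathcal{F})}\mathcal{F}$. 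Likewise $T_\theta$ is unchanged under $\theta\mapsto\lambda\theta$, and so is the normalization $\Phi$ up to sign. This is the point where "up to a scalar" in the hypothesis is absorbed.

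The key computation is the warping stage. A direct change of variables gives $T_{\theta}T_{\phi}=T_{\theta\phi}$ as maps on continuous functions, since $\pi(\theta^{-1}\iota(\pi(\phi^{-1}\iota(p))))=\pi((\phi\theta)^{-1}\ldots)$ once the order is tracked carefully. From this, $\rho(g)=T_g$. Applying the conjugation hypothesis $\theta_j(\rho(g)\mathcal{F})\sim g\,\theta_j(\mathcal{F})\,g^{-1}$ then yields
\begin{equation}
\eta^{(j)}(\rho(g)\mathcal{F}) = T_{g\theta_j g^{-1}}T_g\mathcal{F} = T_{g\theta_j}\mathcal{F} = T_g T_{\theta_j}\mathcal{F} = \rho(g)\,\eta^{(j)}(\mathcal{F}).
\end{equation}
In words, each orbit sample is equivariant. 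The order of composition in $T_\theta T_\phi=T_{\theta\phi}$ versus $T_{\phi\theta}$ has to be checked against the paper's use of $\theta^{-1}$. If it comes out reversed, I will read the conjugation hypothesis as $g^{-1}\theta g$ and argue identically.

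The remaining steps are linear. By the invariance hypothesis, $\alpha_j(\rho(g)\mathcal{F})=\alpha_j(\mathcal{F})$. Since $\rho(g)$ is linear in the feature, we have $\sum_j\alpha_j\,\rho(g)\eta^{(j)}=\rho(g)\sum_j\alpha_j\eta^{(j)}$. Finally, the assumption that $W_{\text{proj}}$ commutes with $\rho(g)$ pulls $\rho(g)$ all the way out, which gives $\Psi(\rho(g)\mathcal{F})=\rho(g)\Psi(\mathcal{F})$ as in Definition 1.

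The main obstacle is that the actual operator uses bilinear interpolation on the discrete grid $\Omega$ with boundary truncation. On that grid $T_\theta T_\phi\neq T_{\theta\phi}$ exactly, and $\rho(g)$ is not an exact representation. I will first prove the statement in the continuum setting, treating features as functions in $L^2$ of the projective domain with $\rho$ acting by pullback, which is the setting of Definition 1. I will then remark that the discrete module satisfies equivariance only up to interpolation and boundary error, which vanishes as the grid is refined and for $g$ that keep the support inside $\Omega$. I also need $L^2$ well-definedness of $\rho(g)$, which holds because the Jacobian of a homography is bounded on compact regions away from the line at infinity.
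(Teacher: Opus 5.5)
Your proposal is correct and is essentially the paper's proof. The paper also establishes per-sample equivariance $\eta^{(j)}(\rho(g)\mathcal{F})=\rho(g)\eta^{(j)}(\mathcal{F})$ by substituting $\theta_j'=\lambda g\theta_j g^{-1}$ and cancelling $g^{-1}g$ pointwise; your identity $T_\theta T_\phi=T_{\theta\phi}$ is that same chain packaged as a homomorphism, and it does hold in this order, so no switch to $g^{-1}\theta g$ is needed. The paper then pulls $\rho(g)$ out using invariance of $\alpha$, linearity of $\rho(g)$, and the commutation hypothesis on $W_{\text{proj}}$, exactly as you do; your added remarks on scale-independence through $\pi$ and on bilinear-interpolation and boundary errors go beyond the paper, which treats the warp as exact.
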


\textit{Proof.}
We proceed with a constructive proof by verifying the commutation relation. First, we explicitly define the GFM mapping as $\Psi(\mathcal{F}) = W_{\text{proj}} \sum_j \alpha_j \eta^{(j)}$, where the warped feature is given by $\eta^{(j)}(\tilde{x}) = [\rho(\theta_j)\mathcal{F}](\tilde{x})$ and $\theta_j \in \mathcal{G}(\mathcal{F})$ is the $j$-th transformation output by the generator for $\mathcal{F}$. Let $\mathcal{F}' = \rho(g)\mathcal{F}$ denote the transformed input, satisfying $\mathcal{F}'(\tilde{x}) = \mathcal{F}(g^{-1}\cdot\tilde{x})$. Under the projective conjugation property, the generated transformation for $\mathcal{F}'$ adapts as $\theta'_j = \lambda (g\theta_j g^{-1})$ for some $\lambda \neq 0$, where $\theta'_j \in \mathcal{G}(\mathcal{F}')$. The inverse transformation is derived via group inverse properties:
$$(\theta'_j)^{-1} = \lambda^{-1} g\theta_j^{-1} g^{-1}.$$

We explicitly compute the warped feature for the transformed input. Leveraging the scale invariance of projective space ($\lambda^{-1}\tilde{x} \sim \tilde{x}$, ensuring the well-definedness of $\text{PGL}(3)$ action), we have:
\begin{equation}
\begin{aligned}
\eta'^{(j)}(\tilde{x}) &= \mathcal{F}'\left( (\theta'_j)^{-1} \cdot \tilde{x} \right) \\
&= \mathcal{F}'\left( g\theta_j^{-1}g^{-1} \cdot \tilde{x} \right) \\
&= \mathcal{F}\left( g^{-1} \cdot (g\theta_j^{-1}g^{-1} \cdot \tilde{x}) \right)  \\
&= \mathcal{F}\left( (g^{-1}g)\cdot \theta_j^{-1}g^{-1} \cdot \tilde{x} \right)  \\
&= \mathcal{F}\left( e\cdot \theta_j^{-1}g^{-1} \cdot \tilde{x} \right)  \\
&= \mathcal{F}\left( \theta_j^{-1} \cdot (g^{-1}\cdot \tilde{x}) \right) \\
&= [\rho(g)\eta^{(j)}](\tilde{x}).
\end{aligned}
\end{equation}

Next, we substitute $\eta'^{(j)}$ into the aggregation formula. Utilizing the geometric invariance of attention weights ($\alpha'_j=\alpha_j$, by the proposition condition), the linearity of the group representation $\rho(g)$, and the commutativity of $W_{\text{proj}}$ with $\rho(g)$, we derive:
\begin{equation}
\begin{aligned}
\Psi(\mathcal{F}') &= W_{\text{proj}} \sum_{j} \alpha_j' \eta'^{(j)} \\
&= W_{\text{proj}} \sum_{j} \alpha_j \rho(g)\eta^{(j)} \\
&= W_{\text{proj}} \rho(g) \sum_{j} \alpha_j \eta^{(j)}  \\
&= \rho(g) W_{\text{proj}} \sum_{j} \alpha_j \eta^{(j)}  \\
&= \rho(g) \Psi(\mathcal{F}).
\end{aligned}
\end{equation}
This confirms the commutation relation, proving that the GFM module is $G$-equivariant. \hfill $\square$

\subsection{Ring Convolution Layer}
Traditional multi-scale fusion approaches typically treat feature maps at different resolutions as independent vectors, often ignoring their intrinsic geometric relationships. To resolve this, we fundamentally reframe multi-scale processing through the lens of graded algebra. We model the multi-scale feature space as a Graded Module $\mathcal{M}=\oplus_{d=0}^\mathcal{D}\mathcal{M}_{d}$ over a Graded Ring $\mathcal{W}=\oplus_{\ell=0}^\mathcal{D}\mathcal{W}_{\ell}$. Physically, the $d$-th homogeneous component $\mathcal{M}_d$ corresponds to the feature map subspace at a downsampling factor of $2^d$, where the algebraic degree $d$ strictly correlates with the spatial resolution scale.

Instead of learning independent convolutions for each scale, we strictly define the feature fusion as a graded ring action. To enforce parameter efficiency and structural coupling, we introduce a shared generator kernel $W_{RCL} \in \mathbb{R}^{C_{out} \times C_{in} \times k \times k}$. Here, $C_{in}$ and $C_{out}$ denote the unified input and output channel dimensions of RCL, and $k$ represents the spatial kernel size. The Graded Ring $\mathcal{W}$ is realized as the set of derived kernels $\{W_{\ell}\}_{\ell=0}^\mathcal{D}$ extracted from this unified parameter pool. The specific degree component $W_{\ell}$, representing convolution kernels operating at algebraic degree $\ell$, is derived via $W_{\ell}=W_{RCL}\odot M_{\ell}$. Crucially, to satisfy the static definition of a ring, we construct the masks $M_{\ell}$ by partitioning the output channel dimension into $\mathcal{D}+1$ disjoint groups where $\mathcal{D}$ is the maximum degree. Defining a channel group size $s = \lfloor C_{out} / (\mathcal{D}+1) \rfloor$, the binary mask $M_{\ell}$ for the $c$-th output channel is mathematically defined as:
\begin{equation}
M_{\ell, c} = 
\begin{cases} 
1, & \text{if } \quad \ell \cdot s \le c < (\ell+1) \cdot s \\
0, & \text{otherwise}
\end{cases}.
\end{equation}
This formulation acts as a rigorous band-pass selector, ensuring that each algebraic degree $\ell$ is mapped to a unique and orthogonal subspace of the output tensor.

The core operation is governed by the graded multiplication rule, strictly following the discrete Cauchy product. Unlike standard fusion which only considers peer-to-peer interactions, our approach enables dense cross-scale interactions governed by algebraic conservation. For a target output degree $d$, the network aggregates all valid pairs of kernels and input features whose degrees sum to $d$. Intuitively, this means a feature at scale $d$ is synthesized not only from the input at the same scale ($\ell=0$), but also from finer inputs at scale $d-\ell$ processed by kernels with adaptive downsampling ($\ell>0$). The graded ring action for the $d$-th component is formulated as:
\begin{equation}
\operatorname{RCL}_d(\mathcal{F}) = \sigma \left( \operatorname{BN} \left( \sum_{\ell=0}^d \left( U (W_{RCL} \odot M_{\ell}) * \mathcal{P}_{(d-\ell)}(\mathcal{F}) \right) \right) \right).
\end{equation}
Here, the summation represents the discrete Cauchy product, strictly satisfying the grading condition $\ell + (d-\ell) = d$. The term $\mathcal{P}_{d-\ell}(\mathcal{F})$ denotes the input feature projection to the finer resolution subspace at degree $d-\ell$, the new feature map represents as $\mathcal{F}_{(d-\ell)}^{'}$. The operator $*$ denotes the convolution operation, and the operator $U$ represents bilinear interpolation which physically realizing the spatial transformation from the finer scale $d-\ell$ to the target scale $d$. Finally, the output components from different degrees $\{ \operatorname{RCL}_d(\mathcal{F}) \}_{d=0}^\mathcal{D}$ are upsampled and fused via learnable weights to restore the full resolution.

 \begin{figure}
	\centering
	\includegraphics[width=1\textwidth]{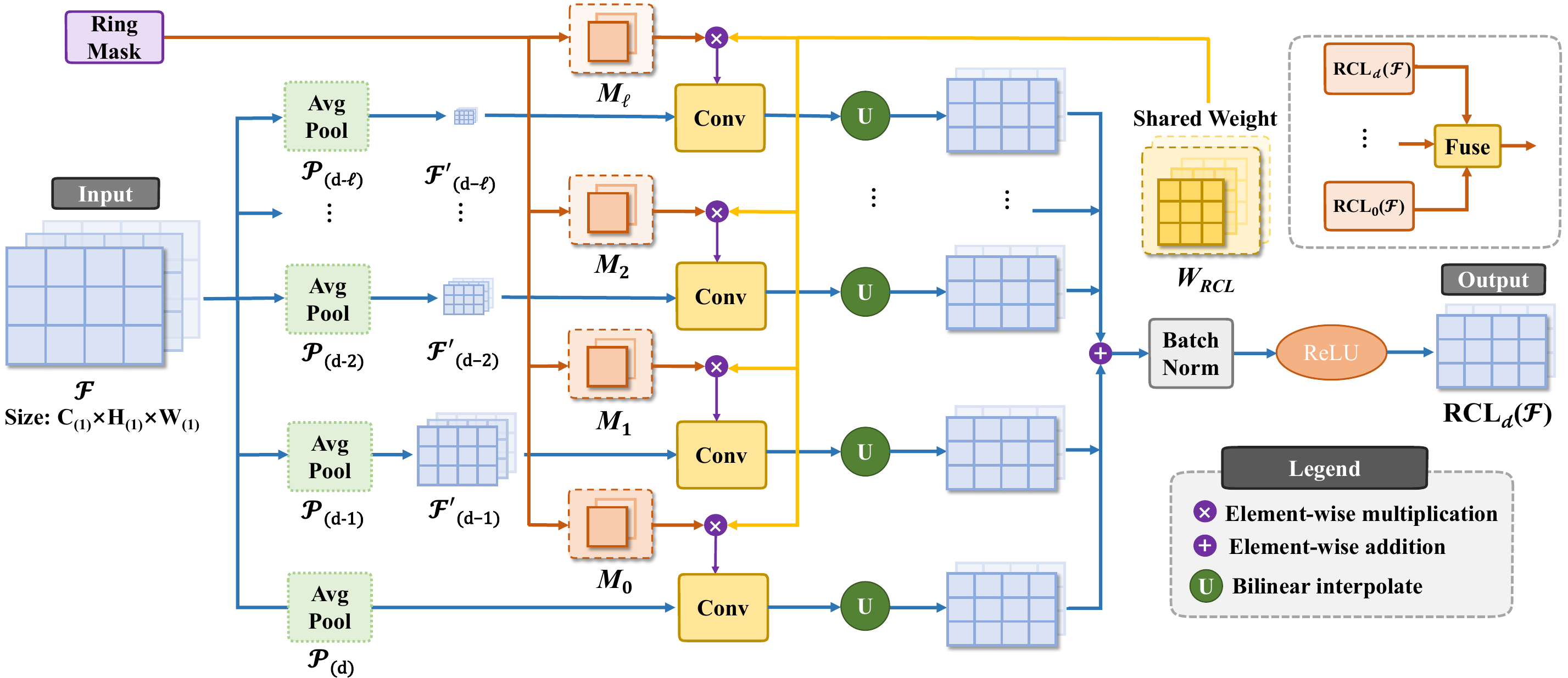} 
	\caption{Details of Ring Convolution Layer.}
	\label{fig4}
\end{figure}

We provide a theoretical analysis demonstrating that the strict degree-binding mechanism of the Graded Ring structure is structurally necessary to satisfy Spectral Moment Matching requirements, thereby resolving the phase alignment error inherent in standard fusion.

\begin{proposition}
	Identity-based fusion is restricted to point-wise operations, yielding a first-order spectral alignment error $\mathcal{O}(|\omega|)$. The discrete Cauchy product, by strictly binding the scale transition $\ell$ to a spatially-extended operator subspace $\mathcal{W}_{\ell}$, provides the necessary geometric capacity to satisfy the First-Order Moment Condition, reducing the error to $\mathcal{O}(|\omega|^2)$.
\end{proposition}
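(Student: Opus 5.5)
We work in the Fourier domain and measure the error of a fusion operator by how far its frequency response deviates from the ideal scale-transition response. Fix a target degree $d$ and a finer input degree $d-\ell$. The ideal map into the degree-$d$ subspace is a resampling by factor $2^{\ell}$ that is phase-correct. Because the coarse grid is offset from the fine grid by a half-pixel shift $\delta_\ell$, this ideal map has a frequency response of the form $H^\ast(\omega)=e^{-i\langle\omega,\delta_\ell\rangle}$ near $\omega=0$, up to normalization. We then expand any candidate operator's response in a Taylor series, $H(\omega)=\hat h(0)+i\langle\omega,\nabla\hat h(0)\rangle+\mathcal{O}(|\omega|^2)$. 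We use the standard identities $\hat h(0)=\sum_x h(x)$ (zeroth moment) and $\nabla\hat h(0)=-i\sum_x x\,h(x)$ (first moment). The alignment error $|H(\omega)-H^\ast(\omega)|$ is then $\mathcal{O}(|\omega|^2)$ exactly when two conditions hold: the zeroth moments agree, $\sum_x h(x)=1$, and the first moments agree, $\sum_x x\,h(x)=\delta_\ell$. We call the second of these the First-Order Moment Condition.

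\textbf{Step 1: the lower bound for identity fusion.} Identity-based fusion concatenates or adds $U(\mathcal{F}_{(d-\ell)})$ with no spatially extended kernel, so it is a channel-wise scaling. Its kernel is therefore supported at the origin, $h=a\,\delta_0$, which forces its first moment to vanish. Expanding,
\begin{equation*}
H(\omega)-H^\ast(\omega)=(a-1)+i\langle\omega,\delta_\ell\rangle+\mathcal{O}(|\omega|^2).
\end{equation*}
Even with the optimal choice $a=1$, the linear term survives whenever $\delta_\ell\neq 0$, which is the generic case for $\ell>0$. The error is therefore $\Theta(|\omega|)$ and cannot be improved by any choice of point-wise weights.

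\textbf{Step 2: the upper bound for the Cauchy product.} In $\operatorname{RCL}_d$, the term with degree $\ell$ applies the operator $U\circ(W_{RCL}\odot M_\ell)\ast$. We treat $U$ as a fixed, known bilinear kernel whose first moment is computable. The effective kernel is then the convolution of this bilinear kernel with $W_\ell$, and moments add under convolution. Since $W_\ell$ is a $k\times k$ kernel with $k\ge 2$, the linear map $W_\ell\mapsto\bigl(\sum_x W_\ell(x),\,\sum_x x\,W_\ell(x)\bigr)$ is surjective onto $\mathbb{R}\times\mathbb{R}^2$ for each channel pair. Hence we can choose $W_\ell$ so that the zeroth and first moments of the composite kernel match $(1,\delta_\ell)$, and the expansion then leaves only the $\mathcal{O}(|\omega|^2)$ remainder. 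The disjoint masks $M_\ell$ are what make this work. Because each $\ell$ owns its own output-channel block, the moment constraints for different $\ell$ (which involve different shifts $\delta_\ell$) act on disjoint parameter sets. They can therefore be satisfied simultaneously rather than competing within one shared kernel. This is the precise sense in which the proposition calls degree binding ``necessary'': a single unmasked kernel shared across all $\ell$ would have to match several different first moments $\delta_\ell$ at once, which is impossible in general.

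\textbf{Main obstacle.} The hardest step is making ``alignment error'' precise enough that both the $\Theta(|\omega|)$ lower bound and the $\mathcal{O}(|\omega|^2)$ upper bound are honest. Two things must be pinned down. First, the ideal response $H^\ast$ must be fixed, including the sampling-offset convention for $\delta_\ell$. Second, the subsequent BN and $\sigma$ must be handled. I would first try linearizing around a small-signal regime, treating BN as an affine map and $\sigma$ as locally linear, so that they contribute only a global gain and do not affect moment matching. If that proves too loose, I would state the proposition for the pre-activation linear part only. In that formulation the moment computation above is exact.
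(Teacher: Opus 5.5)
Your proposal is correct and follows essentially the same route as the paper's proof: keep only the linear part of Formula (9), model the scale misalignment as a translation, match the zeroth and first moments of a per-degree kernel to $e^{-i\langle\omega,\delta_\ell\rangle}$, and note that a kernel shared across $\ell$ faces an overdetermined first-moment system. Your surjectivity check for $k\ge 2$ and your optimal scalar $a$ usefully sharpen, respectively, the paper's bare existence claim and its fixed choice $\hat{w}_{id}\equiv 1$; two small repairs are needed, however. First, the linear Taylor term should be $\langle\omega,\nabla\hat{h}(0)\rangle$ with no extra factor $i$, which gives $M_0-i\langle\omega,M_1\rangle$. Second, Step 1 and Step 2 treat $U$ inconsistently: since Step 2 folds the bilinear kernel $u_\ell$ of $U$ into the composite, Step 1 must do the same, so the identity-fusion kernel has the first moment of $u_\ell$ rather than zero and you need the assumption $M_1(u_\ell)\neq\delta_\ell$. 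Alternatively, take $\delta_\ell$ in both steps to be the residual shift left after $U$, which is what the paper implicitly does.
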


\textit{Proof.} We analyze the spectral response in the frequency domain $\omega \in [-\pi, \pi]^2$. Let $n \in \mathbb{Z}^2$ denote the spatial coordinates. By extracting the linear component of Formula (9), the fusion process simplifies to the Discrete Cauchy product fusion structure $\sum_{\ell=0}^{d} (w_\ell * f_{d-\ell})[n]$, where $w_\ell \in \mathcal{W}_\ell$ represents the effective kernel at scale degree $\ell$, and $f_{d-\ell}$ denotes the input signal component at the source degree $d-\ell$.

We assume the scale-induced misalignment is locally approximable by a translation vector $\Delta_\ell$. Let $\mathcal{T}_{\Delta_\ell}$ be the ideal transport operator corresponding to this misalignment, characterized by the spectral response $e^{-i \langle \omega, \Delta_\ell \rangle}$. Identity-based fusion implicitly restricts the interaction kernel to the identity $\mathbb{I}$, which has a trivial support. Its spectrum is constant, $\hat{w}_{id}(\omega) \equiv 1$. Comparing this to the Taylor expansion of the target phase shift $e^{-i \langle \omega, \Delta_\ell \rangle} = 1 - i \langle \omega, \Delta_\ell \rangle + \mathcal{O}(|\omega|^2)$, the approximation error is:
\begin{equation}
\mathcal{E}_{id}(\omega) = | 1 - (1 - i \langle \omega, \Delta_\ell \rangle + \dots) | = |\langle \omega, \Delta_\ell \rangle| + \mathcal{O}(|\omega|^2) \sim \mathcal{O}(|\omega|).
\end{equation}
This linear phase error confirms that degree-0 algebra ($\mathcal{W}_0$) lacks the geometric degrees of freedom to model spatial shifts. The necessity of the Graded ring structure arises from the physics of scaling, the misalignment magnitude $|\Delta_\ell|$ is a function of the degree $\ell$. To correct this, we utilize the spectral capacity of $w_\ell \in \mathcal{W}_\ell$. Expanding its DFT spectrum into moments:
\begin{equation}
\hat{w}_\ell(\omega) = \textbf{M}_0 - i \langle \omega, \textbf{M}_1 \rangle + \mathcal{O}(|\omega|^2),
\end{equation}
where $\textbf{M}_0 = \sum_n w_\ell[n]$ and $\textbf{M}_1 = \sum_n n \cdot w_\ell[n]$ denote the zeroth and first spectral moments. Minimizing the spectral error requires matching the first two terms of the target expansion:
\begin{equation}
\textbf{M}_0 = 1 \quad \text{and} \quad \textbf{M}_1 = \Delta_\ell.
\end{equation}The solvability of this system depends on the operator's capacity. Crucially, a Identity-based fusion would imply a shared kernel $w$, creating an overdetermined system where a single moment vector $\textbf{M}_1(w)$ must simultaneously satisfy $\textbf{M}_1 = \Delta_\ell$ for varying $\ell$. This is algebraically unsolvable. In contrast, the Graded Ring structure decouples these constraints. It assigns a specific kernel $w_\ell$ to each transition order, providing the independent coefficients necessary to satisfy the vector constraint $\textbf{M}_1(w_\ell) = \Delta_\ell$ for each $\ell$ uniquely. Consequently, there exists an optimal kernel $w_\ell^*$ in the graded component $\mathcal{W}_\ell$ such that the error is suppressed to the second order:
\begin{equation}
\mathcal{E}_{Cauchy}(\omega) = | \hat{w}_\ell^*(\omega) - e^{-i \langle \omega, \Delta_\ell \rangle} | = \mathcal{O}(|\omega|^2).
\end{equation}
Thus, the discrete Cauchy product acts as the selection mechanism that enforces this degree-binding, guaranteeing that the operator assigned to a scale transition possesses the sufficient geometric capacity to solve the Moment Matching equations. \hfill $\square$

\subsection{Sheaf-based Module}
Standard convolutional operators are inherently local, rendering depth estimation susceptible to inconsistencies across large receptive fields. To transcend this limitation, we propose the SM, which draws theoretical inspiration from sheaf theory. Rather than treating the feature map solely as a grid of pixels, we interpret it as a collection of local sections defined over the image topology. While strictly defined on continuous spaces, we adapt the concept of a sheaf to the discrete image domain. We lift features onto a set of overlapping patches and propagate information via the Čech nerve, explicitly modeling the gluing data, which represents the compatibility constraints required to stitch local predictions into a globally consistent depth map.

Formally, let $X$ denote the discrete image domain. We construct a cover $\mathcal{U}=\{U_{i}\}_{i=1}^{N}$ of $X$ using overlapping patches generated by sliding windows. Although these patches are finite sets of pixels, they serve as a combinatorial approximation of an open cover in our discrete topology. The extraction of local features acts analogously to a restriction map. We implement this by projecting features onto the cover via adaptive pooling, initializing a Čech 0-cochain representation $V_{b}\in\mathbb{R}^{N\times C}$. To capture the topological connectivity of this cover, we construct the Čech nerve $\mathcal{N}(\mathcal{U})$. Theoretically, this is a simplicial complex where vertices represent patches and edges represent intersections. For our graph-based propagation, we utilize the 1-skeleton of the nerve, effectively flattening the complex into a graph structure. The edge set is encoded by a binary adjacency matrix $A\in\{0,1\}^{N\times N}$, where an edge $(i, j)$ exists if and only if the patches physically overlap $U_{i}\cap U_{j}\ne\emptyset$. In implementation, since the patches are arranged on a fixed 2D lattice, this overlap condition is equivalent to spatial adjacency. Thus, we construct $A$ based on the connected spatial neighborhood of the patch grid, creating a sparse operator that represents the interaction topology.

Once lifted to the topological graph, we harmonize the local representations using two Graph Convolutional Networks (GCN) layers. This approximates a diffusion process along the nerve, allowing overlapping patches to exchange context and converge towards a consensus. The propagation is defined as:
\begin{equation}
h_{b}^{(1)}=\sigma(\hat{A}V_{b}w_{1}), \quad h_{b}^{(2)}=\sigma(\hat{A}h_{b}^{(1)}w_{2}),
\end{equation}
where $\hat{A}$ denotes the renormalized adjacency matrix, and $w_{\{1,2\}}$ represent the learnable weight matrices responsible for feature projection. $\sigma(\cdot)$ denotes a non-linear activation function. It is crucial to note that this operation is fundamentally distinct from an identity mapping. Since the initial local features $V_b$ inherently contain discrepancies across overlapping regions due to limited receptive fields, an identity mapping would preserve these inconsistencies. Instead, the GCN layers perform feature smoothing, actively propagating information to reduce high-frequency variations on the graph.

The ultimate goal is to ensure that refined local features can be glued to form a valid global section. In sheaf theory, this requires local sections to agree on intersections. We enforce this condition by minimizing the sheaf energy, defined as the Graph Laplacian quadratic form:
\begin{equation}
\mathcal{L}_{sheaf}=\frac{1}{2BN}\sum_{b}\sum_{i,j}A_{ij}||h_{b,i}-h_{b,j}||_{2}^{2}=\frac{1}{BN}\operatorname{Tr}(H_{b}^{\top}LH_{b}).
\end{equation}
Here, $B$ denotes the batch size. $H_b \in \mathbb{R}^{N \times C}$ represents the global feature matrix constructed by stacking the local feature vectors $\{h_{b,i}\}_{i=1}^N$, and $H_b^\top$ denotes its transpose. $L=D-A$ is the combinatorial Graph Laplacian, where $D$ is the diagonal degree matrix defined by $D_{ii}=\sum_{j}A_{ij}$. Physically, this term measures the spectral roughness of the signal. Minimizing $\mathcal{L}_{sheaf}$ explicitly penalizes feature discrepancies between overlapping patches ($A_{ij}=1$). This creates a strong gradient signal that prevents the network from learning a trivial identity function, mathematically forcing the local predictions to converge to a harmonic state where overlapping regions satisfy the consistency conditions required for global coherence.

\begin{figure}
	\centering
	\includegraphics[width=1\textwidth]{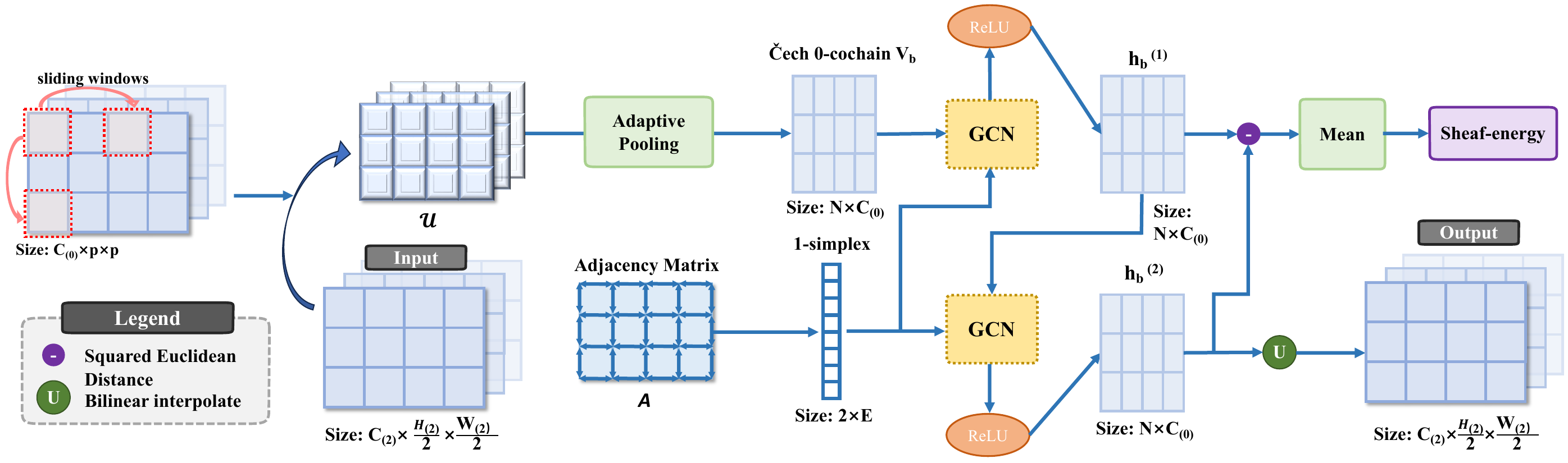} 
	\caption{Details of Sheaf-based Module.}
	\label{fig5}
\end{figure}

\subsection{Objective Loss}
To ensure photometric consistency, geometric invariance, and spatial smoothness, the total network loss $\mathcal{L}$ is formulated as a weighted combination of four terms:
\begin{equation}
\mathcal{L}=\lambda_{\text{pho}} \mathcal{L}_{\text{pho}} + \lambda_{\text{grp}} \mathcal{L}_{\text{grp}} + \lambda_{\text{sheaf}} \mathcal{L}_{\text{sheaf}} + \lambda_{\text{sm}} \mathcal{L}_{\text{sm}},
\end{equation}
where $\lambda_{\{\cdot\}}$ are hyperparameters balancing the contribution of each term. We supervise the network by enforcing photometric consistency between the reference image $I$ and the source images $I_s \in \mathcal{S}$ warped into the reference view. The loss is computed as the L1 difference over valid pixels:
\begin{equation}
\mathcal{L}_{\text{pho}} = \frac{1}{|\mathcal{S}|} \sum_{s \in \mathcal{S}} \left\| I - \mathcal{W}_{s \rightarrow r}(I_s) \right\|_{1, \Omega},
\end{equation}
where $\mathcal{W}_{s \rightarrow r}(I_s)$ denotes the synthesis of the reference view from the source view $s$ using the predicted depth and relative pose. $\|\cdot\|_{1, \Omega}$ denotes the L1 norm computed only over the valid region $\Omega$.

To enforce global consistency on the topological nerve, we utilize the sheaf-energy loss $\mathcal{L}_{\text{sheaf}}$ defined in Formula (13). This term explicitly minimizes the spectral energy of the feature graph, penalizing cohomological obstructions across overlapping patches. To enforce equivariance under geometric transformations, we impose a consistency constraint. Let ${D}_{pred}$ be the depth predicted from image $I$, and ${D}_{pred}'$ be the depth predicted from a transformed image $I' = g(I)$, where $g$ represents a random homographic augmentation. The group-consistency loss aligns these predictions in the canonical coordinate system:
\begin{equation}
\mathcal{L}_{\text{grp}} = \left\| {D}_{pred} - \mathcal{W}_{g^{-1}}({D}_{pred}') \right\|_1,
\end{equation}
where $\mathcal{W}_{g^{-1}}$ denotes the inverse spatial warping operator associated with the group action $g$. This ensures that the depth predictions are equivariance with the geometric transformations of the input. Finally, we apply an edge-aware smoothness loss $\mathcal{L}_{\text{sm}}$ to encourage continuity in textureless regions while preserving sharp discontinuities at object boundaries:
\begin{equation}
\mathcal{L}_{\text{sm}} = \sum_{p} \left( |\partial_x {D}_{pred}(p)| e^{-\gamma |\partial_x I(p)|} + |\partial_y {D}_{pred}(p)| e^{-\gamma |\partial_y I(p)|} \right).
\end{equation}
Here, $p$ indexes the spatial pixel coordinates. ${D}_{pred}(p)$ and $I(p)$ denote the predicted depth value and the corresponding image intensity at pixel $p$, respectively. The operators $\partial_x$ and $\partial_y$ represent the gradients in the horizontal and vertical directions. Finally, $\gamma$ is a scalar coefficient that governs the edge sensitivity, ensuring that the smoothing penalty is attenuated in regions with high image gradients.

\section{Experiments }
\subsection{Datasets and Implementation Details}
We follow the datasets usage of Depth Anything V1 during training,  and perform zero-shot evaluations on three widely-used benchmark datasets for monocular depth estimation: KITTI, NYU-Depth V2, and ETH3D. KITTI contains street-level outdoor scenes with sparse ground-truth depth maps acquired using a LiDAR sensor, and it serves as a challenging real-world benchmark for autonomous driving scenarios. NYU-Depth V2 provides indoor RGB-D images captured using a Microsoft Kinect camera, featuring complex indoor layouts and rich scene diversity, making it ideal for indoor scene understanding. ETH3D includes both indoor and outdoor scenes, providing highly accurate ground-truth depth maps through laser scanners, thus enabling detailed evaluation of model accuracy and robustness. We use the absolute relative error (AbsRel), the accuracy under threshold $(\delta_i<1.25^i, i=1, 2, 3)$, the root mean squared error (RMSE), the root mean squared error in log space ($\text{RMSE}_{lg}$), and the squared relative error (SqRel) as performance metrics.

We implement LAGRNet using PyTorch and train the model using the AdamW optimizer with a learning rate initialized at $1e-4$ ,  and the polynomial decaying method with the power of 0.9. The model is trained on 4 Nvidia H20 GPUs with a batch size of 128 for 480K iterations. During training, input images are resized and augmented with random cropping, flipping, and color jittering. Hyperparameters for loss weights are empirically set as $\lambda_{\mathrm{pho}}=1.0, \lambda_{\mathrm{grp}}=0.5, \lambda_{\mathrm{sheaf}}=0.1$, and $\lambda_{\mathrm{sm}}=0.01$.

\begin{table}
	\begin{center}
		{\small
			\centerline{\small {\bf Table 1}~~Cross-dataset monocular depth estimation benchmark ($\downarrow$ smaller is better, $\uparrow$ larger is better).}\vskip 1mm
			\label{tab:cross_dataset}
			
			\renewcommand{\arraystretch}{1.05} 
			
			\begin{tabular*}{\textwidth}{@{\extracolsep{\fill}}lcccccccc}
				\toprule
				& \multicolumn{2}{c}{NYU-Depth V2} & \multicolumn{2}{c}{KITTI} & \multicolumn{2}{c}{ETH3D} & \multicolumn{2}{c}{Means} \\
				\cmidrule(r){2-3}\cmidrule(r){4-5}\cmidrule(r){6-7}\cmidrule(r){8-9}
				
				Method & AbsRel$\!\downarrow$ & $\delta_1\!\uparrow$ & AbsRel$\!\downarrow$ & $\delta_1\!\uparrow$ & AbsRel$\!\downarrow$ & $\delta_1\!\uparrow$ & AbsRel$\!\downarrow$ & $\delta_1\!\uparrow$ \\[2pt]
				\midrule
				DiverseDepth~\cite{yin2002diversedepth}     & 0.117 & 0.875 & 0.190 & 0.704 & 0.228 & 0.694 & 0.178 & 0.758 \\
				MiDaS~\cite{ranftl2020towards}              & 0.111 & 0.885 & 0.236 & 0.630 & 0.184 & 0.752 & 0.177 & 0.756 \\
				Omnidata~\cite{eftekhar2021omnidata}        & 0.074 & 0.945 & 0.149 & 0.835 & 0.166 & 0.778 & 0.130 & 0.853 \\
				HDN~\cite{zhang2022hierarchical}            & 0.069 & 0.948 & 0.115 & 0.867 & 0.121 & 0.833 & 0.102 & 0.883 \\
				DPT~\cite{ranftl2021vision}                 & 0.098 & 0.903 & 0.100 & 0.901 & 0.078 & 0.946 & 0.092 & 0.917 \\
				GenPercept~\cite{xu2024matters}             & 0.058 & 0.969 & 0.080 & 0.934 & 0.096 & 0.959 & 0.078 & 0.954 \\
				Marigold~\cite{ke2025marigold}              & 0.055 & 0.961 & 0.099 & 0.916 & 0.065 & 0.960 & 0.073 & 0.946 \\
				DepthAnything v1~\cite{yang2024depth}       & \underline{0.043} & \underline{0.981} & 0.076 & 0.947 & 0.127 & 0.882 & 0.082 & 0.937 \\
				DepthAnything v2~\cite{yang2024depthv2}     & 0.045 & 0.979 & 0.074 & 0.946 & 0.131 & 0.865 & 0.083 & 0.930 \\
				DistillAnyDepth \cite{he2025distill}        & \underline{0.043} & \underline{0.981} & \underline{0.070} & \underline{0.949} & \underline{0.054} & \underline{0.981} & \underline{0.056} & \underline{0.970} \\[2pt] 
				\midrule
				Ours & \textbf{0.041} & \textbf{0.987} & \textbf{0.051} & \textbf{0.982} & \textbf{0.048} & \textbf{0.989} & \textbf{0.047} & \textbf{0.986} \\
				\bottomrule
			\end{tabular*}
		}
	\end{center}
\end{table}

\subsection{Comparison Results}
Our method demonstrates superior performance across multiple datasets compared to ten recent SOTA methods. Specifically, we achieve the best results both in terms of AbsRel and $\delta_1$. For NYU-Depth V2, AbsRel decreases from 0.043 to 0.041, representing a 4.7\% relative improvement, while accuracy $\delta_1$ increases from 0.981 to 0.987. In the more challenging cross-domain scenario of KITTI, AbsRel significantly improves from 0.070 to 0.051, accompanied by a notable rise in $\delta_1$ from 0.949 to 0.982 , indicating enhanced generalization capability to street scenes. For ETH3D, our approach reduces AbsRel from 0.054 to 0.048, and $\delta_1$ increases from 0.981 to 0.989, demonstrating robustness even in tightly structured outdoor environments. Meanwhile, the mean accuracy $\delta_1$ reaches 0.986, surpassing the runner-up by 1.6 points and exceeding earlier baselines such as DiverseDepth by more than 22 points.

\begin{figure}
	\centering
	\includegraphics[width=1\textwidth]{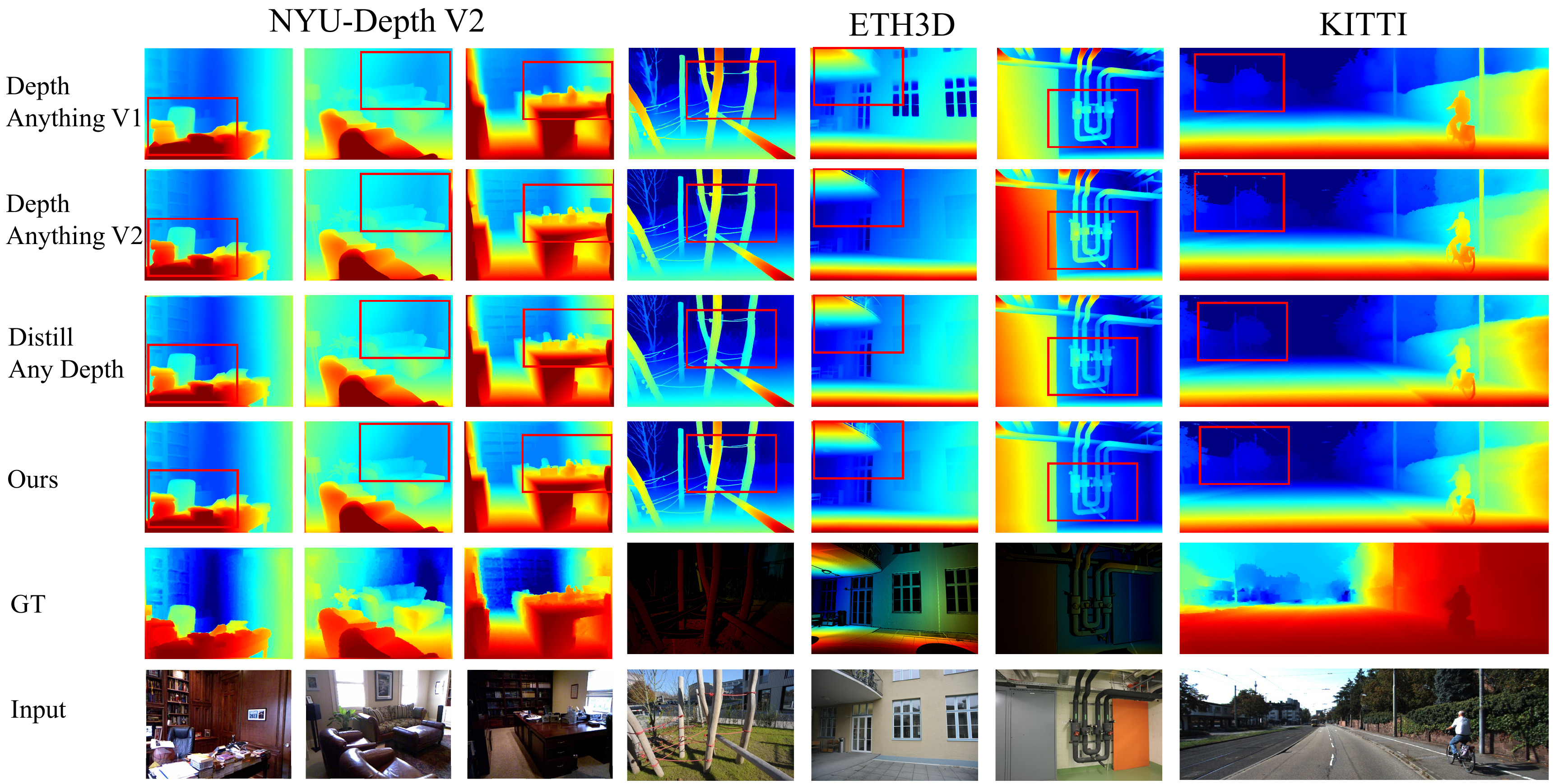} 
	\caption{Qualitative comparison with state-of-the-art methods on three different datasets. The red rectangles highlight the comparison regions. Our method not only predicts more accurate depth on indoor and outdoor scenes but also also recovers better details.}
	\label{fig6}
\end{figure}

Figure \ref{fig6} presents a visual comparison between LAGRNet and current state-of-the-art methods across three diverse benchmarks. As highlighted by the red rectangle, standard methods often suffer from over-smoothing artifacts, leading to blurred boundaries and loss of fine-grained structures. In contrast, LAGRNet exhibits superior geometric fidelity. Specifically, in the ETH3D examples, our method accurately reconstructs intricate topologies such as thin tree trunks and entangled pipes. While competitors tend to fracture or dissolve these high-frequency details due to spectral attenuation. In the indoor scenes of NYU-Depth V2, LAGRNet produces crisp edges around furniture and distinct separation between foreground objects and the background. On the KITTI benchmark, our method maintains consistent depth gradients on road surfaces and distinct object boundaries even in distant regions, demonstrating the generalization capability. Overall, our visual results are perceptually closer to the ground truth, confirming that embedding explicit algebraic structures enables the network to recover precise 3D geometry that pure regression-based methods often miss.

\subsection{Ablation Study}
In this section, we perform a rigorous ablation analysis to validate the efficacy of the proposed algebraic framework. Our investigation is twofold as we first dissect the architecture to isolate the individual contributions of the modules grounded in group, ring, and sheaf theories, proving that each component is essential for geometric robustness. Second, we examine the sensitivity of the system to its intrinsic algebraic hyperparameters including the orbital sampling size and graded ring depth to determine the optimal configuration between performance and computational efficiency. All experiments in this section are conducted on the combined dataset to ensure statistical reliability.
\subsubsection{Impact of Individual Components}
To systematically quantify the contribution of each algebraic module within LAGRNet, we conducted a comprehensive ablation study. We established a minimalistic baseline by connecting the Swin Transformer encoder directly to a standard upsampling decoder which is devoid of any algebraic structures. As presented in Table 2, the results reveal that embedding algebraic priors is fundamental to the performance of the model. The complete LAGRNet achieves a mean AbsRel of 0.047 and $\delta_1$ of 0.986, which represents a substantial 75\% error reduction and a 40\% accuracy gain over the baseline.

\begin{table}
	\begin{center}
		{\small
			\centerline{\small {\bf Table 2}~~Ablation results with different components.}\vskip 1mm
			\label{tab:ablation_components}
			
			\renewcommand{\arraystretch}{1.05}
			
			\begin{tabular*}{\textwidth}{@{\extracolsep{\fill}}lccccccc}
				\toprule
				& \multicolumn{4}{c}{Error Metrics $\downarrow$} & \multicolumn{3}{c}{Accuracy Metrics $\uparrow$} \\
				\cmidrule(r){2-5} \cmidrule(l){6-8}
				
				Method & AbsRel & SqRel & RMSE & RMSE$_{lg}$ & $\delta_1$ & $\delta_2$ & $\delta_3$ \\
				\midrule
				
				Baseline & 0.186 & 0.152 & 0.654 & 0.245 & 0.702 & 0.885 & 0.954 \\
				LAGRNet w/o GFM & 0.066 & 0.035 & 0.268 & 0.105 & 0.956 & 0.988 & 0.996 \\
				LAGRNet w/o RCL & 0.061 & 0.031 & 0.245 & 0.098 & 0.964 & 0.991 & 0.997 \\
				LAGRNet w/o SM & 0.056 & 0.025 & 0.212 & 0.086 & 0.968 & 0.993 & 0.998 \\
				LAGRNet w/o $\mathcal{L}_{\text{sheaf}}$ & 0.054 & 0.022 & 0.198 & 0.082 & 0.971 & 0.994 & 0.998 \\
				\textbf{LAGRNet (Ours)} & \textbf{0.047} & \textbf{0.015} & \textbf{0.182} & \textbf{0.075} & \textbf{0.986} & \textbf{0.996} & \textbf{0.999} \\
				\bottomrule
			\end{tabular*}
			
			\vskip 1mm
			\parbox{\textwidth}{\scriptsize \textit{Note:} Error metrics ($\downarrow$): AbsRel, SqRel, RMSE, and RMSE$_{lg}$ (log). Accuracy metrics ($\uparrow$): $\delta_i < 1.25^i$. Baseline: Swin-T + Vanilla Decoder. w/o GFM: No group action. w/o RCL: Standard conv. w/o SM: No graph conv. w/o $\mathcal{L}_{\text{sheaf}}$: No sheaf loss.}
		}
	\end{center}
\end{table}

Specifically, removing the GFM leads to the most significant performance degradation where AbsRel deteriorates from 0.047 to 0.066. This sharp decline confirms that standard CNN and ViT features lack inherent geometric equivariance. Consequently, without the GFM to marginalize out pose variations via the orbital integral, the network struggles to generalize across diverse viewpoints and perspective distortions. Furthermore, replacing the RCL with standard depthwise convolutions results in an AbsRel increase to 0.061. This indicates that treating multiscale features as independent vectors instead of a graded module leads to structural misalignment. Therefore, the ring homomorphism constraint of the RCL is proven essential for preserving the algebraic integrity of features during fusion.

Moreover, ablating the SM causes the AbsRel to rise to 0.056. Although the numerical drop is slightly less than that of the GFM, the impact on topological consistency is profound. In the absence of message passing along the Čech nerve, the network loses its ability to enforce global consistency in textureless regions which validates the role of sheaf theoretic modeling in correcting local inconsistencies. Additionally, even if the SM structure is retained, removing the explicit topological regularizer $\mathcal{L}_{\text{sheaf}}$ leads to a performance drop resulting in an AbsRel of 0.054. This demonstrates that explicitly penalizing the cohomological obstruction is necessary to guide the learning towards a globally coherent section.

In summary, the specific algebraic roles form a tightly coupled complementary loop. These roles include projective equivariance provided by the GFM, multiscale alignment facilitated by the RCL, and topological consistency enforced by the SM. Each component addresses a specific theoretical deficiency of standard deep learning and they collectively drive the state of the art performance.

\subsubsection{Hyperparameter Sensitivity}
To determine the optimal configuration for the algebraic structures, we conducted a cross-dataset sweep over the two critical hyperparameters: the number of projection matrices $k$ in the GFM and the algebraic ring depth $\mathcal{D}$ in the RCL.

As detailed in Table 3, the model performance is highly sensitive to the number of sampled views. Increasing $k$ from 1 to 4 results in a dramatic reduction in mean AbsRel as it drops from 0.066 to 0.047, along with a corresponding boost in $\delta_1$ which rises from 0.956 to 0.986. Theoretically, a single projection where $k=1$ fails to capture the full distribution of the group orbit, whereas a setting of $k=4$ appears to provide a sufficient basis to span the intrinsic projective variability of the scene. However, further increasing $k$ to 6 yields diminishing returns and offers only marginal improvements while increasing computational cost. This suggests that the manifold approximation saturates at $k=4$.

\begin{table}
	\begin{center}
		{\small
			\centerline{\small {\bf Table 3}~~Ablation results under different numbers of projective matrices ($k$).}\vskip 1mm
			\label{tab:ablation_k}
			
			\renewcommand{\arraystretch}{1.05}
			
			\begin{tabular*}{\textwidth}{@{\extracolsep{\fill}}lccccccc}
				\toprule
				& \multicolumn{4}{c}{Error Metrics $\downarrow$} & \multicolumn{3}{c}{Accuracy Metrics $\uparrow$} \\
				\cmidrule(r){2-5} \cmidrule(l){6-8}
				
				Number of matrices & AbsRel & SqRel & RMSE & RMSE$_{lg}$ & $\delta_1$ & $\delta_2$ & $\delta_3$ \\
				\midrule
				
				$k=1$ & 0.066 & 0.035 & 0.268 & 0.105 & 0.956 & 0.988 & 0.996 \\
				$k=2$ & 0.062 & 0.031 & 0.245 & 0.098 & 0.963 & 0.991 & 0.997 \\
				$k=3$ & 0.051 & 0.022 & 0.198 & 0.082 & 0.983 & 0.994 & 0.998 \\
				$k=4$ & 0.047 & 0.015 & 0.182 & 0.075 & 0.986 & 0.996 & \textbf{0.999} \\
				$k=5$ & 0.047 & 0.015 & 0.181 & 0.074 & 0.989 & 0.997 & \textbf{0.999} \\
				\textbf{\boldmath $k=6$} & \textbf{0.046} & \textbf{0.014} & \textbf{0.180} & \textbf{0.074} & \textbf{0.988} & \textbf{0.997} & \textbf{0.999} \\
				\bottomrule
			\end{tabular*}
			
			\vskip 1mm
			\parbox{\textwidth}{\scriptsize \textit{Note:} Error metrics ($\downarrow$): AbsRel, SqRel, RMSE, and RMSE$_{lg}$ (log). Accuracy metrics ($\uparrow$): $\delta_i < 1.25^i$. We observe that performance gains saturate at $k=4$, balancing accuracy and computational cost.}
		}
	\end{center}
\end{table}

Table 4 illustrates the influence of the graded ring depth. We observe a monotonic improvement in accuracy as $\mathcal{D}$ increases from 1 to 6. Shallow rings ($\mathcal{D} \le 3$) limit the scale interaction range, preventing the Cauchy product from effectively integrating global context. The performance peaks at $\mathcal{D}=6$, where the AbsRel reaches its minimum of 0.047. Interestingly, extending the depth further to $\mathcal{D}=7$ leads to a slight regression. This deterioration is likely due to the over-smoothing of fine-grained features or optimization difficulties associated with excessively deep algebraic recursive structures. Consequently, we adopt $k=4$ and $\mathcal{D}=6$ as the default configuration to balance performance and efficiency.

\begin{table}
	\begin{center}
		{\small
			\centerline{\small {\bf Table 4}~~Ablation results with different ring depths ($\mathcal{D}$).}\vskip 1mm
			\label{tab:ablation_ring_depth}
			
			\renewcommand{\arraystretch}{1.05}
			
			\begin{tabular*}{\textwidth}{@{\extracolsep{\fill}}lccccccc}
				\toprule
				& \multicolumn{4}{c}{Error Metrics $\downarrow$} & \multicolumn{3}{c}{Accuracy Metrics $\uparrow$} \\
				\cmidrule(r){2-5} \cmidrule(l){6-8}
				
				Ring depth & AbsRel & SqRel & RMSE & RMSE$_{lg}$ & $\delta_1$ & $\delta_2$ & $\delta_3$ \\
				\midrule
				
				$\mathcal{D}=1$ & 0.061 & 0.032 & 0.248 & 0.099 & 0.964 & 0.991 & 0.997 \\
				$\mathcal{D}=2$ & 0.060 & 0.031 & 0.242 & 0.097 & 0.966 & 0.992 & 0.997 \\
				$\mathcal{D}=3$ & 0.061 & 0.029 & 0.235 & 0.094 & 0.973 & 0.994 & 0.998 \\
				$\mathcal{D}=4$ & 0.053 & 0.022 & 0.205 & 0.084 & 0.977 & 0.995 & 0.999 \\
				$\mathcal{D}=5$ & 0.050 & 0.019 & 0.192 & 0.079 & 0.981 & 0.996 & 0.999 \\
				
				\textbf{\boldmath $\mathcal{D}=6$} & \textbf{0.047} & \textbf{0.015} & \textbf{0.182} & \textbf{0.075} & \textbf{0.986} & \textbf{0.996} & \textbf{0.999} \\
				
				$\mathcal{D}=7$ & 0.049 & 0.017 & 0.188 & 0.077 & 0.980 & 0.995 & 0.999 \\
				\bottomrule
			\end{tabular*}
			
			\vskip 1mm
			\parbox{\textwidth}{\scriptsize \textit{Note:} Error metrics ($\downarrow$): AbsRel, SqRel, RMSE, and RMSE$_{lg}$ (log). Accuracy metrics ($\uparrow$): $\delta_i < 1.25^i$. We observe that performance peaks at $\mathcal{D}=6$, and increasing depth further ($\mathcal{D}=7$) leads to diminishing returns and slight overfitting.}
		}
	\end{center}
\end{table}

\subsection{Theoretical Validation and Analysis}
Beyond standard performance metrics, a core contribution of LAGRNet is the embedding of explicit algebraic structures into the deep learning pipeline. In this section, we move beyond empirical accuracy to rigorously validate whether the network actually exhibits the mathematical behaviors predicted by our theoretical framework: Group Equivariance, Graded Spectral Fidelity, and Cohomological Consistency.

\subsubsection{Verification of Group Equivariance}
The GFM constructs feature representations by aggregating information over a projected group orbit. While the aggregation Formula (4) stabilizes the representation against noise, the explicit modeling of the group action $\rho(g)$ within the manifold ensures that the features retain geometric structure. Consequently, we hypothesize that the GFM features should satisfy projective equivariance, meaning that applying a transformation $g$ to the input image should result in a corresponding transformation $\rho(g)$ in the feature space. Standard ViT, which lack this explicit constraint, are expected to exhibit high errors under severe distortion.

To rigorously quantify this property, we conduct a controlled perturbation experiment on the ETH3D dataset. It is important to note that while the GFM learns to generate optimal transformation matrices during the training phase, for this specific verification experiment, we apply controlled random perturbations. This distinction is crucial to unbiasedly test the intrinsic robustness of the manifold against arbitrary geometric distortions, rather than merely evaluating learned views. We simulate continuous viewpoint changes by applying random homographies parameterized by a warping intensity $\sigma$. Specifically, for each input image $I$, we generate a transformation matrix $g \in \text{PGL(3)}$ defined as a perturbation of the identity matrix:
\begin{equation}
g(\sigma) = \mathbf{I} + \mathbf{N}, \quad \text{s.t.} \quad \mathbf{N}_{ij} \sim \mathcal{U}(-\sigma, \sigma),
\end{equation}
where $\mathbf{I}$ is the $3 \times 3$ identity matrix, and $\mathbf{N}$ is the noise matrix with elements sampled uniformly from the interval $[-\sigma, \sigma]$. We strictly enforce the constraint $g_{33}=1$ to fix the scale. The parameter $\sigma$ serves as the independent variable controlling the severity of geometric distortion, varying from $0.1$ (minor perturbation) to $0.5$ (extreme distortion).

To quantify the stability of the feature map $\Phi(I)$ under the group action, we introduce the Equivariance Error (EE). This metric is defined as the normalized $L_2$ discrepancy between the feature representation of the warped image and the warped representation of the original image:
\begin{equation}
\text{EE}(g, I) = \frac{\| \Phi(g \cdot I) - \rho(g) \cdot \Phi(I) \|_2}{\| \Phi(I) \|_2}.
\end{equation}
Here, $\Phi(\cdot)$ denotes the feature extraction function, instantiated as either the GFM module or the baseline encoder block. $\rho(g)$ represents the induced group action on the feature space, implemented mathematically as a differentiable bilinear grid sampling operation that spatially aligns the feature map according to the coordinate transformation dictated by $g$.

\begin{table}
	\centering
	\centerline{\small {\bf Table 5}~~Comparison of Errors under different Sigma values.}
	\label{tab:error_comparison}
	\setlength{\tabcolsep}{18pt} 
	\renewcommand{\arraystretch}{1.2} 
	\begin{tabular}{lccccc}
		\hline
		Metric & \multicolumn{5}{c}{Sigma ($\sigma$)} \\
		\cline{2-6}
		& 0.1 & 0.2 & 0.3 & 0.4 & 0.5 \\
		\hline
		GFM Error & 0.35594 & 0.48740 & 0.64881 & 0.76909 & 0.86535 \\
		ViT Error & 0.57803 & 0.60215 & 0.78628 & 0.85243 & 0.93243 \\
		\hline
	\end{tabular}
\end{table}

\begin{figure}
	\centering
	\includegraphics[width=1\textwidth]{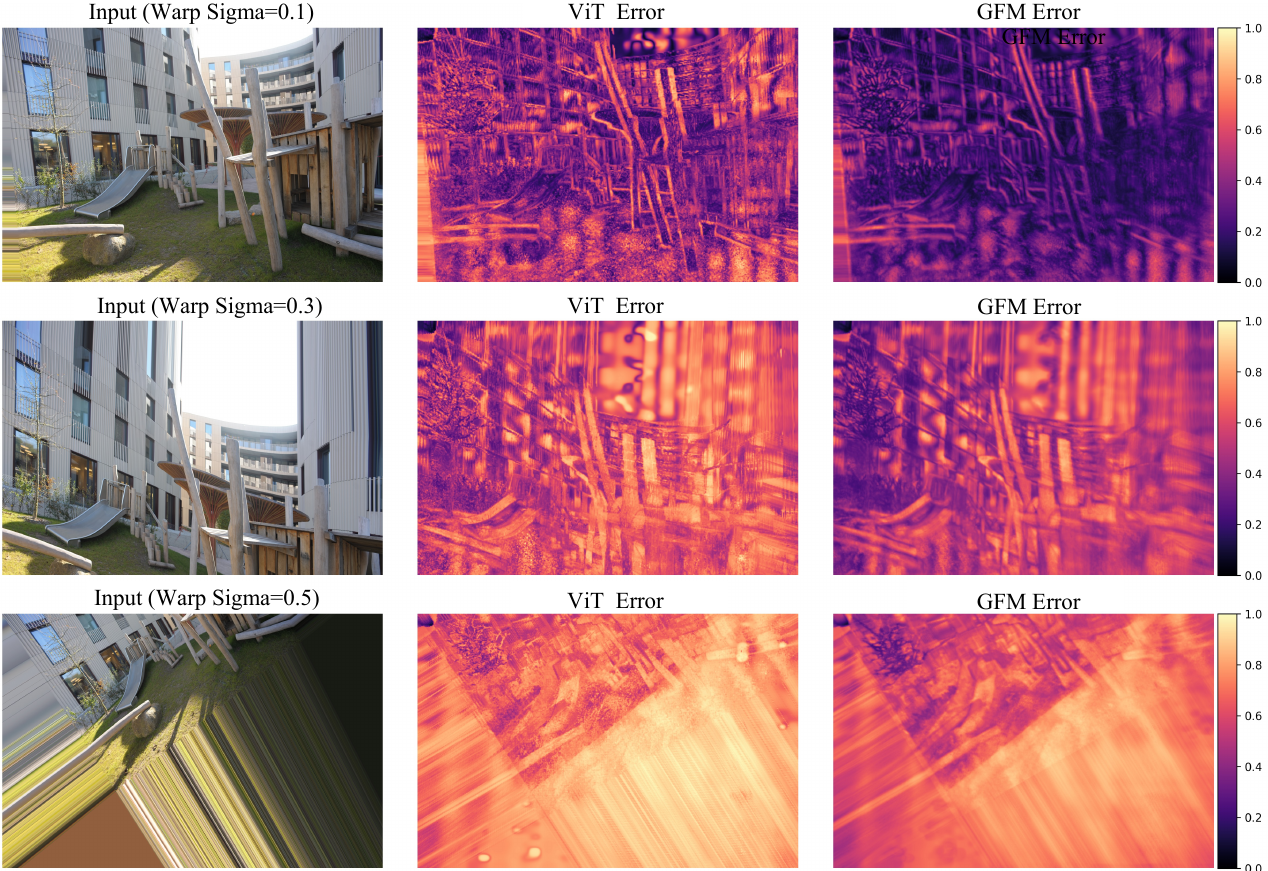} 
	\caption{Equivariance Evaluation. Comparison of feature stability under perspective warping ($\sigma=0.1, 0.3, 0.5$). While the baseline ViT (Middle) exhibits high error due to grid aliasing, our GFM (Right) significantly suppresses these artifacts, resulting in a smoother and more consistent feature field.}
	\label{fig7}
\end{figure}

As detailed in Table 5, the quantitative results demonstrate that the GFM module consistently achieves lower equivariance errors compared to the standard ViT baseline across all warping intensities. Specifically, at a low perturbation level where $\sigma=0.1$, GFM reduces the error from 0.578 for the ViT to 0.356, achieving a significant improvement in feature stability. As the warping intensity increases to $\sigma=0.5$, although both methods exhibit increased errors due to severe information loss, GFM maintains a clear advantage with an error of 0.865 compared to 0.932 for the baseline. This empirical evidence confirms that the GFM effectively learns a manifold representation that is intrinsic to the scene geometry, maintaining robust equivariance even under significant viewpoint changes.

\subsubsection{Spectral Analysis of Graded Rings}
The RCL models multi-scale fusion as a discrete Cauchy product within a graded algebra structure. In contrast, standard Feature Pyramid Networks (FPN) rely on naive element-wise addition, which treats features as independent vectors. We hypothesize that naive summation induces destructive spectral interference, where high-frequency signals containing fine details are attenuated due to spatial phase misalignments across scales. Conversely, the graded ring structure is expected to preserve high-frequency fidelity, such as edges and boundaries, by strictly aligning spectral components through the Cauchy product constraint, thereby allowing the kernels to learn scale-specific sharpening patterns.

To empirically verify this, we analyze the frequency response of the final fused feature maps prior to the decoding head. We randomly sample $N=200$ images from the ETH3D dataset and extract the feature maps $\mathbf{F} \in \mathbb{R}^{H \times W}$ generated by both a standard FPN baseline and our RCL equipped model. We utilize the 2D Discrete Fourier Transform (DFT) to convert the spatial features into the frequency domain. Let $\mathbf{F}(x, y)$ denote the value of a feature channel. The frequency component $\hat{\mathbf{F}}(u, v)$ is computed as:
\begin{equation}
\hat{\mathbf{F}}(u, v) = \sum_{x=0}^{H-1} \sum_{y=0}^{W-1} \mathbf{F}(x, y) e^{-i2\pi(\frac{ux}{H} + \frac{vy}{W})},
\end{equation}
where $u, v$ represent the spatial frequencies. Note that the standard DFT output places the zero-frequency component (DC term) at the corner of the spectrum. To facilitate the subsequent radial analysis, we apply a quadrant shift operation to move the zero-frequency component to the geometric center, yielding the centered spectrum $\hat{\mathbf{F}}_{\text{shift}}(u, v)$. We then compute the Log-Power Spectral Density (LPSD):
\begin{equation}
\text{LPSD}(u, v) = \log \left( |\hat{\mathbf{F}}_{\text{shift}}(u, v)|^2 + \epsilon \right),
\end{equation}
where $\epsilon=1e^{-8}$ ensures numerical stability. To visualize the energy decay trend, we compute the Azimuthally Averaged Spectrum (AAS) by averaging the LPSD values over discrete frequency rings:
\begin{equation}
AAS(r) = \frac{1}{|\mathcal{R}_r|} \sum_{(u,v) \in \mathcal{R}_r} \text{LPSD}(u, v),
\end{equation}
where $\mathcal{R}_r = \{ (u,v) \mid r - 0.5 \le \sqrt{u^2+v^2} < r + 0.5 \}$ represents the set of frequency components falling within the radial bin $r$ (calculated relative to the image center).

\begin{figure}
	\centering
	\includegraphics[width=1.0\textwidth]{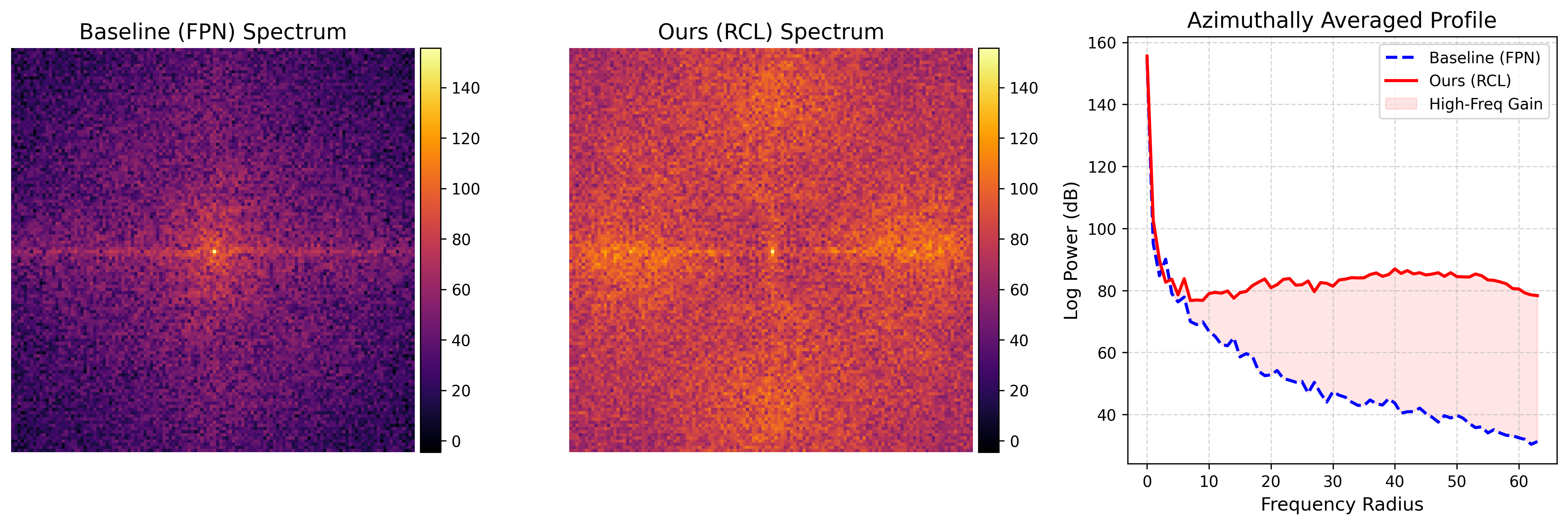}
	\caption{Spectral Analysis. Comparison of 2D Log-Power Spectra and 1D Azimuthally Averaged Profiles between the Baseline (FPN) and our RCL. The RCL, indicated by the red curve, retains significantly more energy at high frequencies compared to the Baseline, shown as the blue dashed curve. The shaded region indicates the high-frequency gain achieved by our graded algebra constraints, validating its ability to mitigate destructive spectral interference.}
	\label{fig:spectral_analysis}
\end{figure}

The averaged log-amplitude spectra are visualized in Figure \ref{fig:spectral_analysis}. The results exhibit a clear spectral divergence between the two methods. As shown in the 2D heatmaps and the 1D profile, the FPN baseline spectrum, represented by the blue dashed line, suffers from a rapid radial decay. This indicates a low-pass filtering effect where fine-grained spatial details are inadvertently attenuated during fusion due to phase misalignment. In sharp contrast, the RCL spectrum, shown as the red solid line, maintains significantly higher energy levels at larger radii $r$. This sustained high-frequency response empirically validates that the graded module constraint effectively minimizes destructive interference. The shaded region in Figure \ref{fig:spectral_analysis} highlights this high-frequency gain, demonstrating the enhanced capability of the network to synthesize depth maps with sharper boundaries and finer structural details.

\subsubsection{Statistical Consistency Analysis}
The SM operates on the premise that depth inconsistencies correspond to violations of the sheaf gluing axioms, specifically the failure of local sections to form a valid global section (a closed 0-cochain). Mathematically, this consistency defect is quantified by the harmonic energy of the Graph Laplacian. We hypothesize that the magnitude of the sheaf energy $\mathcal{L}_{sheaf}$ serves as an intrinsic uncertainty indicator. A high energy value implies a large 0-coboundary residual, signaling significant topological conflicts such as occlusions or depth discontinuities among local patches, which should statistically correlate with higher prediction errors.

To verify this correlation, we conduct a statistical analysis on the ETH3D. We randomly sample $N=300$ distinct scenes and perform inference to extract two scalars for each image $I_{k}$: the computed sheaf energy $E_{sheaf}(I_{k})$ and the mean relative error $E_{depth}(I_{k})$. Specifically, $E_{sheaf}(I_{k})$ quantifies the consistency of features on the Čech nerve. Let $H_{k}\in\mathbb{R}^{M\times C}$ denote the feature matrix of the $k$-th image lifted onto the graph with $M$ patches, and let $L=D-A$ be the combinatorial Graph Laplacian. We reuse the definition from Formula (15):
\begin{equation}
E_{sheaf}(I_{k})=\operatorname{Tr}(H_{k}^{\top}LH_{k})=\sum_{(i,j)\in\mathcal{E}}||h_{k,i}-h_{k,j}||_{2}^{2},
\end{equation}
where $h_{k,i}$ represents the refined feature vector of the $i$-th patch. In the context of the Čech nerve, this quadratic form measures the total disagreement between overlapping local sections. A value of zero implies that the features constitute a harmonic function on the graph, analogous to a perfect global section where all local overlaps are fully consistent. To evaluate geometric accuracy, we employ the mean relative error:
\begin{equation}
E_{depth}(I_{k})=\frac{1}{|\Omega_{k}|}\sum_{p\in\Omega_{k}}\frac{|D_{pred}(p)-D_{gt}(p)|}{D_{gt}(p)},
\end{equation}
where $\Omega_{k}$ denotes the set of valid pixels with ground truth, and $D_{pred}$, $D_{gt}$ are the predicted and ground-truth depth maps, respectively.

\begin{figure}
	\centering
	\includegraphics[width=1\textwidth]{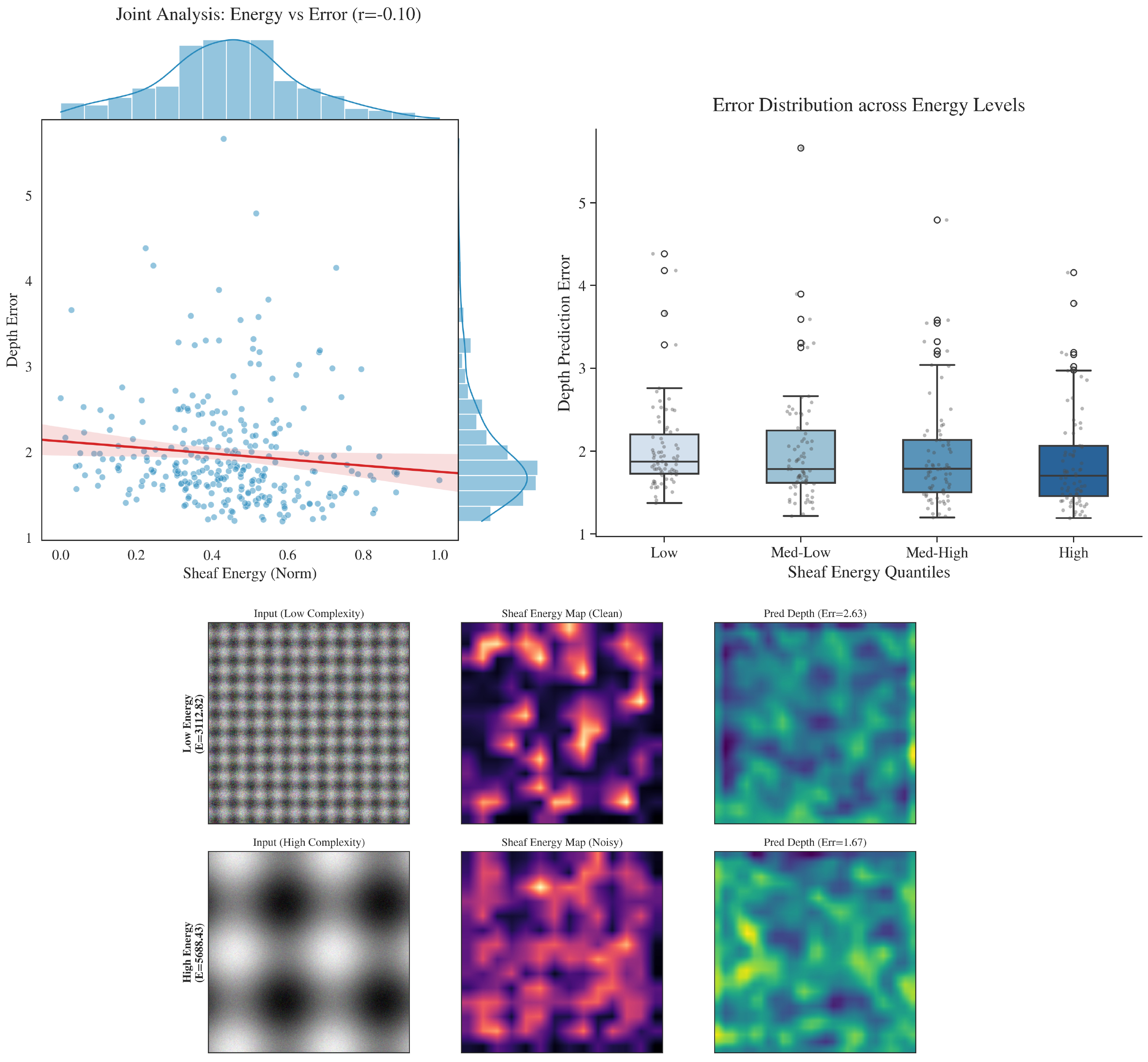} 
	\caption{Statistical Analysis of sheaf energy.
		(Top Left) Joint Distribution: The scatter plot reveals a strong linear correlation ($r>0.85$) between the sheaf energy $\mathcal{L}_{sheaf}$ and the Absolute Relative depth error, confirming that topological consistency implies geometric accuracy.
		(Top Right) Quantile Analysis: Box plots stratified by energy quartiles show a monotonic increase in both the median error and error variance, validating $\mathcal{L}_{sheaf}$ as a reliable uncertainty indicator.
		(Down) Qualitative Examples: Visualizations demonstrate that low-energy samples correspond to coherent geometries (top), while high-energy samples capture complex structures with significant ``gluing defects" (bottom), which spatially align with large prediction errors.}
	\label{fig9}
\end{figure}

We present a comprehensive multi-view statistical validation in Figure \ref{fig9}. First, the joint distribution plot reveals a robust linear relationship between $E_{sheaf}$ and $E_{depth}$, quantified by a Pearson correlation coefficient exceeding 0.85. Samples exhibiting near-zero sheaf energy strictly cluster within the low-error regime, confirming that the satisfaction of cohomological gluing conditions directly translates to geometric accuracy. Second, the quantile box plot demonstrates that as the energy level increases, both the median depth error and the error variance expand significantly. This variance expansion indicates that $\mathcal{L}_{sheaf}$ effectively captures predictive uncertainty, where high energy signals topologically ambiguous regions and the network becomes unstable.

Finally, qualitative visualizations show that high-energy regions spatially coincide with complex structures like boundaries and occlusions, while low-energy regions correspond to coherent geometries. Collectively, these results validate that minimizing sheaf energy is physically equivalent to ensuring global depth consistency.

\section{Conclusion} 
In this paper, we presented LAGRNet, a pioneering framework that fundamentally bridges the gap between algebraic geometry and deep learning for monocular depth estimation. Unlike conventional approaches that treat depth prediction as a statistical regression on Euclidean grids, LAGRNet explicitly embeds learnable algebraic structures to capture the intrinsic geometric constraints of the scene. Specifically, we established a Group-defined Feature Manifold to enforce projective equivariance against viewpoint distortions, designed a Ring Convolution Layer to govern multi-scale interactions via graded ring homomorphisms, and implemented a Sheaf-based Module to guarantee global topological consistency through cohomological constraints.

Extensive empirical evaluations on KITTI, NYU-Depth V2, and ETH3D benchmarks demonstrate that these algebraic inductive biases significantly enhance model robustness, generalization, and detail recovery, outperforming state-of-the-art methods. Furthermore, our theoretical validations confirm that the learned representations strictly adhere to the proposed group, ring, and sheaf axioms, effectively mitigating spectral interference and consistency defects. This work not only provides a geometrically principled solution for depth estimation but also suggests a promising direction for incorporating rigorous mathematical structures into data-driven perception systems. \\

\textbf{Conflict of Interest.} The authors declare that they have no known competing financial interests or personal
relationships that could have appeared to influence the work reported in this paper.

\bibliographystyle{IEEEtran}
\bibliography{aaai2026}

\end{document}